\newcommand\defequal{\mathrel{\overset{\makebox[0pt]{\mbox{\tiny def}}}{=}}}
\newcommand{\eqdef}{\defequal}
\newcommand{\E}{\mathbb{E}} 
\renewcommand{\P}{\mathbb{P}} 
\newcommand{\Exp}[2]{\E_{#1}\left[#2\right]} 
\newcommand{\T}{\mathcal{T}}
\newcommand{\beq}{\begin{equation}}
\newcommand{\eeq}{\end{equation}}
\newcommand{\beqa}{\begin{eqnarray}}
\newcommand{\eeqa}{\end{eqnarray}}
\newcommand{\beqan}{\begin{eqnarray*}}
\newcommand{\eeqan}{\end{eqnarray*}}
\newtheorem{theorem}{Theorem}
\newtheorem{predefinition}{Definition}
\newenvironment{definition}[1]
{
	\begin{predefinition}[\emph{#1}]
	}{
	\end{predefinition}
}
\newtheorem{lemma}{Lemma}
\newtheorem{proposition}{Proposition}
\def\hado#1{\textcolor{magenta}{[Hado: #1]}}
\title{Hindsight Credit Assignment}
\author{
Anna Harutyunyan, Will Dabney, Thomas Mesnard, Nicolas Heess, Mohammad G. Azar, \\ 
{\bf Bilal Piot, Hado van Hasselt, Satinder Singh, Greg Wayne, Doina Precup, R\'{e}mi Munos } \\
DeepMind \\
\texttt{\{harutyunyan, wdabney, munos\}@google.com}
}
\begin{document}

\maketitle

\begin{abstract}
  We consider the problem of efficient credit assignment in reinforcement learning. In order to efficiently and meaningfully utilize new data, we propose to explicitly assign credit to past decisions based on the likelihood of them having led to the observed outcome. This approach uses new information in hindsight, rather than employing foresight. Somewhat surprisingly, we show that value functions can be rewritten through this lens, yielding a new family of algorithms. We study the properties of these algorithms, and empirically show that they successfully address important credit assignment challenges, through a set of illustrative tasks.
\end{abstract}

\section{Introduction}

A reinforcement learning (RL) agent is tasked with two fundamental, interdependent problems: exploration (how to discover useful data), and credit assignment (how to incorporate it). In this work, we take a careful look at the problem of credit assignment. The instrumental learning object in RL -- the value function  -- quantifies the following question: {\em ``how does choosing an action $a$ in a state $x$ affect future return?''}. This is a challenging question for  several reasons.
\vspace{-.8em}
\paragraph{Issue 1: Variance.} The simplest way of estimating the value function is by averaging returns (future discounted sums of rewards) starting from taking $a$ in $x$. This Monte Carlo style of estimation is inefficient, since there can be a lot of randomness in trajectories.
\vspace{-.8em}
\paragraph{Issue 2: Partial observability.} To amortize the search and reduce variance, temporal difference (TD) methods, like Sarsa and  Q-learning, 
use a learned approximation of the value function and bootstrap. This introduces bias due to the approximation, as well as a 
reliance on the Markov assumption, which is especially problematic when the agent operates outside of a Markov Decision Process (MDP), for example if the state is partially observed, or if there is function approximation. Bootstrapping may then cause the value function to not converge at all, or to remain permanently biased~\cite{singh1994learning}.
\vspace{-.8em}
\paragraph{Issue 3: Time as a proxy.} TD($\lambda$) methods
control this bias-variance trade-off, but they rely on {\em time} as the sole metric for relevance: the more recent the action, the more credit or blame it receives from a future reward~\cite{sutton1988learning, Sutton2019Book}. Although time is a reasonable proxy for cause-and-effect (especially in MDPs), in general it is a heuristic, and can hence be improved by learning. 
\vspace{-.8em}
\paragraph{Issue 4: No counterfactuals.} The only data used for estimating an action's value are trajectories that contain that action, while ideally we would like to be able to use the same trajectory to update {\em all} relevant actions, not just the ones that happened to (serendipitously) occur. 

Figure~\ref{fig:illustration} illustrates these issues concretely. At the high-level, we wish to achieve credit assignment mechanisms that are both sample-efficient (issues 1 and 4), and expressive (issues 2 and 3).
To this end, we propose to reverse the key learning question, and learn estimators that measure: {\em ``given the future outcome (reward or state), how relevant was the choice of $a$ in $x$ to achieve it?}'', which is essentially the credit assignment question itself. Although eligibility traces consider the same question, they do so in a way that is (purposefully) equivalent to the forward view~\cite{sutton1988learning}, and so they have to rely mainly on ``vanilla" features, like time, to decide credit assignment.
Reasoning in the backward view explicitly opens up a new family of algorithms.
Specifically, we propose 
to use a form of {\em hindsight conditioning} to determine the relevance of a past action to a particular outcome. We show that the usual value functions can be rewritten in hindsight, yielding a new family of estimators, and derive policy gradient algorithms that use these estimators. 
We demonstrate empirically the ability of these algorithms to address the highlighted issues through a set of diagnostic tasks, which are not handled well by other means.

\begin{figure}[t]
    \centering
    \includegraphics[scale=0.4]{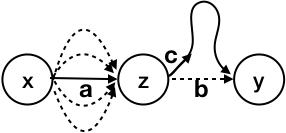}
    \hspace{2em}
    \includegraphics[scale=0.4]{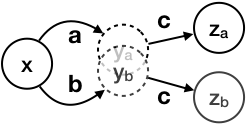}
    \caption{ 
    {\bf Left.} Consider the trajectory shown by solid arrows to be the sampled trajectory, $\tau$. An RL algorithm will typically assign credit for the reward obtained in state $y$ to the actions along $\tau$. This is unsatisfying for two reasons: (1) action $a$ was not essential in reaching state $z$, any other $a'$ would have been just as effective; hence, overemphasizing $a$ is a source of variance; (2) from $z$, action $c$ was sampled, leading to a multi-step trajectory into $y$, but action $b$ transitions to $y$ from $z$ directly; so, it should get more of the credit for $y$. Note that $c$ could have been an exploratory action, but also could have been more likely according to the policy in $z$, but {\em given that $y$ was reached}, $b$ was more likely 
    {\bf Right.} The choice between actions $a$ or $b$ at state $x$ causes a transition to either $y_a$ or $y_b$, but they are perceptually aliased. On the next decision, the same action $c$ transitions the agent to different states, depending on the true underlying $y$. The state $y$ can be a single state, or could itself be a trajectory. This scenario can happen e.g. when the features are being learned.
    A TD algorithm that bootstraps in $y$ will not be able to learn the correct values of $a$ and $b$, since it will average over the rewards of $z_a$ and $z_b$. When $y$ is a potentially long trajectory with a noisy reward, a Monte Carlo algorithm will incorporate the noise along $y$ into the values of both $a$ and $b$, despite it being irrelevant to the choice between them.
    We would like to be able to directly determine the relevance of $a$ to being in $z_a$. 
    }
    \label{fig:illustration}
\end{figure}

\section{Background and Notation}

A Markov decision process (MDP)~\cite{puterman1994markov} is a tuple $(\mathcal{X},\mathcal{A}, p, r,\gamma)$, with $\mathcal{X}$ being the state space, $\mathcal{A}$ - the action space, $p: \mathcal{X}\times\mathcal{A} \times \mathcal{X} \to [0,1]$ -- the state-transition distribution (with $p(y|x,a)$ denoting the probability of transitioning to state $y$ from $x$ by choosing action $a$), $r:\mathcal{X}\times\mathcal{A}\to\mathbb R$ -- the reward function, and $\gamma\in [0,1)$ -- the scalar discount factor. A stochastic policy $\pi$ maps each state to a distribution over actions: $\pi(a|x)$ denotes the probability of choosing action $a$ in state $x$. Let $\T(x, \pi)$ and $\T(x, a, \pi)$ be the distributions over trajectories $\tau=(X_k, A_k, R_k)_{k\in\mathbb{N^+}}$ generated by a policy $\pi$, given $X_0=x$ and $(X_0,A_0)=(x,a)$, respectively.
Let $Z(\tau)\defequal\sum_{k\geq 0} \gamma^k R_k$ be the return obtained along the trajectory $\tau$. The value (or V-) function $V^\pi$ and the action-value (or Q-) function $Q^{\pi}$ denote the expected return under the  policy $\pi$ given $X_0=x$ and $(X_0,A_0)=(x,a)$, respectively:
\begin{align}
    V^\pi (x) \defequal \E_{\tau\sim\T(x, \pi)}\Big[ Z(\tau)\Big],\qquad
    Q^\pi (x,a) \defequal \E_{\tau\sim\T(x, a, \pi)}\Big[ Z(\tau)\Big].\label{eq:q-normal}
\end{align}
 The benefit of choosing a given action $a$ over the usual policy $\pi$ is measured by the advantage function $A^\pi (x, a) \defequal Q^\pi (x,a) - V^\pi (x)$.
Policy gradient algorithms improve the policy by changing $\pi$ in the direction of the gradient of the value function~\cite{Sutton:2000}.  This gradient at some initial state $x_0$ is
\begin{align}
    \nabla V^\pi(x_0) &= \sum_{x,a} d^{\pi}(x|x_0) Q^{\pi}(x,a)\nabla \pi(a|x) = \E_{\tau\sim\T(x_0, \pi)}\Big[\sum_a \sum_{k\geq 0}\gamma^k A^{\pi}(X_k, a)\nabla \pi(a|X_k)  \Big], \notag 
\end{align}
where $d^\pi(x|x_0)\defequal\sum_{k}\gamma^k\P_{\tau\sim\T(x_0, \pi)}(X_k=x)$ is the (unnormalized) discounted state-visitation distribution. Practical algorithms such as REINFORCE~\cite{williams1992simple} approximate  $Q^{\pi}$ or $A^{\pi}$ with an $n$-step truncated return, possibly combined with a bootstrapped approximate value function $V$, which is also often used as baseline (see~\cite{Sutton:2000,A3C_2016}) along a trajectory $\tau=(X_k,A_k,R_k)_{k} \sim \T(x, a, \pi)$:
\begin{equation*}
    A^\pi(x,a) \approx \sum_{k=0}^{n-1}\gamma^kR_k + \gamma^n V(X_n) - V(x).
\end{equation*}
\section{Conditioning on the Future}
%

The classical value function attempts to answer the question: "how does the current action affect future outcomes?"  
By relying on predictions about these future outcomes, existing approaches often exacerbate problems around variance (issue 1) and partial observability (issue 2). Furthermore, these methods tend to use temporal distance as a proxy for relevance (issue 3) and are unable to assign credit counter-factually (issue 4).
We propose to learn estimators that explicitly consider the credit assignment question: 
\textit{"given an outcome, how relevant were past decisions?"}, and try to answer it explicitly. 

This approach can in fact be linked to some classical methods in statistical estimation. In particular, 
Monte Carlo simulation is known to be inaccurate when there are rare events that are of interest: the averaging requires an infeasible number of samples to obtain an accurate estimate~\cite{rubino2009rare}. One solution is to {\em change measures}, that is, to use another distribution for which the events are less rare, and correct with importance sampling. 
The Girsanov theorem is a well-known example of this in processes with Brownian dynamics~\cite{girsanov1960transforming}, known to produce lower variance estimates. 

This scenario of rare random events is particularly relevant to efficient credit assignment in RL. When a new significant outcome is experienced, the agent ought to quickly update its estimates and policy accordingly. Let $\tau\sim\T(x, \pi)$ be a sampled trajectory, and $F$ some function of it. By changing measures from the policy $\pi$ with which it was sampled to a future-conditional, or {\em hindsight} distribution $h(\cdot|x, \pi, F(\tau))$,
we hope to improve the efficiency of credit assignment. 
The importance sampling ratio $\frac{h(a|x, \pi, F(\tau))}{\pi(a|x)}$ then precisely denotes the relevance of an action $a$ to the specific future $F(\tau)$. If the distribution $h(a|x,\pi, F(\tau))$ is accurate, this allows us to quickly assign credit to all actions relevant to achieving $F(\tau)$. 
In this work, we consider $F$ to be a future {\em state}, or a future {\em return}. To highlight the use of the future-conditional distribution, we refer to the resulting family of methods as Hindsight Credit Assignment (HCA).

The remainder of this section formalizes the insight outlined above, and derives the usual value functions in terms of the hindsight distributions, while the subsequent section presents novel policy gradient algorithms based on these estimators.

\subsection{Conditioning on Future States}
\label{sec:conditioning-on-future}

The agent composes its estimates of the return from an action $a$ by summing over the rewards obtained from future states $X_k$. One option of hindsight conditioning is to consider, at each step, the likelihood of an action $a$ {\em given that the future state $X_k$ was reached.} 

\begin{definition}{State-conditional hindsight distributions}
For any action $a$ and any state $y$, define $h_k(a|x, \pi, y)$ to be the conditional probability over trajectories $\tau\sim \T(x, \pi)$ of the first action $A_0$ of trajectory $\tau$ being equal to $a$, given that the state $y$ has occurred at step $k$ along trajectory $\tau$:
\begin{equation}
\label{eq:state-cond-def}
h_k(a|x, \pi, y) \defequal \P_{\tau\sim \T(x, \pi)}(A_0= a| X_k= y).
\end{equation}
\end{definition}
Intuitively, $h_k(a|x, \pi, y)$ quantifies the relevance of action $a$ to the future state $X_k$. If $a$ is not relevant to reaching $X_k$, this probability is simply the policy $\pi(a|x)$ (there is no relevant information in $X_k$). If $a$ is instrumental to reaching $X_k$, $h_k(a|x, \pi, y) > \pi(a|x)$, and vice versa, if $a$ detracts from reaching $X_k$, $h_k(a|x, \pi, y) < \pi(a|x)$. In general, $h_k$ is a lower-entropy distribution than $\pi$. The relationship of $h_k$ to more familiar quantities can be understood through the following identity obtained by an application of Bayes' rule:
\[ \frac{h_k(a|x, \pi, y)}{\pi(a|x)} = \frac{\P(X_k=y | X_0=x, A_0=a, \pi)}{\P(X_k=y|X_0=x, \pi)} = \frac{\P_{\tau\sim \T(x, a, \pi)}(X_k = y)}{\P_{\tau\sim \T(x, \pi)}(X_k = y)}.\]
Using this identity and importance sampling, we can rewrite the usual Q-function in terms of $h_k$. Since throughout there is only one policy $\pi$ involved, we will drop the explicit conditioning, but it is implied. 
\begin{theorem}
\label{thm:q-xk}
Consider an action $a$ and a state $x$ for which $\pi(a|x)>0$ . Then the following holds  
\begin{equation}
     Q^\pi(x,a) = r(x,a) + \E_{\tau\sim\T(x, \pi)}\Big[ \sum_{k\geq 1}\gamma^k \frac{h_k(a|x, X_k)}{\pi(a|x)} R_k\Big]. \notag 
\end{equation}
\end{theorem}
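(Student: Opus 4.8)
The plan is to begin from the definition $Q^\pi(x,a) = \E_{\tau\sim\T(x,a,\pi)}[Z(\tau)]$, split off the deterministic first reward $R_0 = r(X_0,A_0) = r(x,a)$, and then change the measure of the remaining discounted sum $\sum_{k\geq 1}\gamma^k R_k$ from the action-conditioned trajectory distribution $\T(x,a,\pi)$ to the state-only distribution $\T(x,\pi)$, handled one timestep $k$ at a time.

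For a fixed $k\geq 1$, I would first condition on the state $X_k$. Because the reward is a deterministic function of the current state-action pair and, by the Markov property, the conditional law of everything from step $k$ onward (in particular of $A_k$) given $X_k=y$ is the same under $\T(x,a,\pi)$ and under $\T(x,\pi)$ — namely $\pi(\cdot|y)$ for $A_k$ — we get $\E_{\tau\sim\T(x,a,\pi)}[R_k \mid X_k=y] = \E_{\tau\sim\T(x,\pi)}[R_k \mid X_k=y] =: r^\pi(y)$. Hence $\E_{\tau\sim\T(x,a,\pi)}[\gamma^k R_k] = \gamma^k \sum_y \P_{\tau\sim\T(x,a,\pi)}(X_k=y)\, r^\pi(y)$. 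Now I substitute the Bayes-rule identity already recorded above, $\P_{\tau\sim\T(x,a,\pi)}(X_k=y) = \tfrac{h_k(a|x,y)}{\pi(a|x)} \P_{\tau\sim\T(x,\pi)}(X_k=y)$; this is legitimate since $\pi(a|x)>0$, and any state $y$ with $\P_{\tau\sim\T(x,\pi)}(X_k=y)=0$ also has $\P_{\tau\sim\T(x,a,\pi)}(X_k=y)=0$ (as $\P_{\T(x,\pi)}(X_k=y)\geq \pi(a|x)\P_{\T(x,a,\pi)}(X_k=y)$), so no $0/0$ arises where it matters. The expression becomes $\gamma^k \E_{\tau\sim\T(x,\pi)}\big[\tfrac{h_k(a|x,X_k)}{\pi(a|x)} r^\pi(X_k)\big]$, and, since $h_k(a|x,X_k)/\pi(a|x)$ is $X_k$-measurable, re-absorbing the conditioning on $X_k$ turns $r^\pi(X_k)$ back into $R_k$, yielding $\gamma^k \E_{\tau\sim\T(x,\pi)}\big[\tfrac{h_k(a|x,X_k)}{\pi(a|x)} R_k\big]$.

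Summing this over $k\geq 1$ and adding $r(x,a)$ gives the claimed identity. Interchanging the infinite sum with the expectation on both sides is justified by dominated convergence: since $h_k\leq 1$, the $k$-th reweighted summand is bounded in absolute value by $\gamma^k \sup|r| / \pi(a|x)$, which is summable because $\gamma<1$ (assuming, as is standard, that the rewards are bounded).

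I expect the only real care needed to be in the measure-theoretic bookkeeping rather than in the algebra: stating the Markov step precisely (conditioning on $X_k$ removes all dependence on the initial action $a$), and verifying that the hindsight ratio is well defined and that the term-by-term change of measure is valid for every $k$, including states unreachable under one of the two distributions. The rest is just the supplied Bayes identity together with linearity of expectation.
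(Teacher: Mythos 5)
Your proof is correct and follows essentially the same route as the paper's: expand $Q^\pi(x,a)$ over timesteps, replace $R_k$ by $r^\pi(X_k)$ via the Markov property, apply the Bayes identity $\P_{\T(x,a,\pi)}(X_k=y) = \frac{h_k(a|x,y)}{\pi(a|x)}\P_{\T(x,\pi)}(X_k=y)$ to change measure, and reassemble the expectation under $\T(x,\pi)$. The only difference is that you spell out the measure-theoretic bookkeeping (absolute continuity of the action-conditioned law, and dominated convergence for the sum--expectation interchange) that the paper leaves implicit.
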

So, each of the rewards $R_k$ along the way is weighted by the ratio $\frac{h_k(a|x, X_k)}{\pi(a|x)}$, which exactly quantifies how relevant $a$ was in achieving the corresponding state $X_k$. Following the discussion above, this ratio is $1$ if $a$ is irrelevant, and larger or smaller than $1$ in the other cases. The expression for the Q-function is similar to that in Eq.~\eqref{eq:q-normal}, but the new expectation is no longer conditioned on the initial action $a$ -- the policy $\pi$ is followed from the start ($A_0\sim \pi(\cdot|x)$ instead of $A_0=a$). This is an important point, as it will allow us to use returns generated by {\em any} action $A_0$ to update the values of all actions, to the extent that they are relevant according to $\frac{h_k(a|x, X_k)}{\pi(a|x)}$. Theorem~\ref{thm:q-xk} implies the following expression for the advantage:
\begin{align}
     A^\pi(x,a) &= r(x,a) - r^\pi(x) + \E_{\tau\sim\T(x, \pi)}\Big[ \sum_{k\geq 1} \Big( \frac{h_k(a|x, X_{k})}{\pi(a|x)} - 1 \Big) \gamma^kR_k 
     \Big], \label{eq:adv-x}
\end{align}
where $r^\pi(x) = \sum_{a\in\mathcal{A}}\pi(a|x)r(x,a)$. This form of the advantage is particularly appealing, since it directly removes irrelevant rewards from consideration. Indeed, whenever $\frac{h_k(a|x, X_{k})}{\pi(a|x)} = 1$, the reward $R_k$ does not participate in the advantage for the value of action $a$. When there is inconsequential noise that is outside of the agent's control, this may greatly reduce the variance of the estimates.

\paragraph{Removing time dependence.} For clarity of exposition, here we have considered the hindsight distribution to be additionally conditioned on time. Indeed, $h_k$ depends not only on reaching the state, but also on the number of timesteps $k$ that it takes to do so. In general, this can be limiting, as it introduces a stronger dependence on the particular trajectory, and a harder estimation problem of the hindsight distribution. It turns out we can generalize all of the results presented here to a {\em time-independent} distribution $h_\beta(a|x, y)$, which gives the probability of $a$ conditioned on reaching $y$ {\em at some point} in the future.
The scalar $\beta\in[0,1)$ 
is the "probability of survival" at each step. This can either be the discount $\gamma$, or a termination probability if the problem is undiscounted.  In the discounted reward case 
 Eq.~\eqref{eq:adv-x}  can  be re-expressed in terms of $h_{\beta}$ as follows:
\begin{align}
     A^\pi(x,a) &= r(x,a) - r^\pi(x) + \E_{\tau\sim\T(x, \pi)}\Big[ \sum_{k\geq 1} \Big( \frac{h_\beta(a|x, X_{k})}{\pi(a|x)} - 1 \Big) \gamma^kR_k 
     \Big] \label{eq:adv-x-ind},
\end{align}
with the choice of $\beta=\gamma$. The interested reader may find the relevant proofs in the appendix.

Finally, it is possible to obtain a hindsight V-function, analogously to the Q-function from Theorem~\ref{thm:q-xk}. The next section does this for {\em return}-conditional HCA. We include other variations in appendix.

\subsection{Conditioning on Future Returns}
The previous section derived Q-functions that explicitly reweigh the rewards at each step, based on the corresponding states' connection to the action whose value we wish to estimate. Since ultimately  we are interested in the return, we could alternatively use it for future conditioning itself.
\begin{definition}{Return-conditional hindsight distributions}
For any action $a$ and any possible return $z$, define $h_z(a|x,\pi,z)$ to be the conditional probability over trajectories $\tau\sim \T(x, \pi)$ of the first action $A_0$  being $a$, given that $z$ has been observed along $\tau$:
\[h_z(a|x,\pi,z)\eqdef \P_{\tau\sim \T(x, \pi)} \big( A_0 = a | Z(\tau) = z\big).\]
\end{definition}
The distribution $h_z(a|x,\pi,z)$ is intuitively similar to $h_k$, but instead of future states, it directly quantifies the relevance of $a$ to obtaining the entire return $z$. This is appealing, since in the end we care about returns. Further, this could be simpler to learn, since instead of the possibly high-dimensional state, we now need to worry only  about a scalar outcome. On the other hand, it is no longer "jumpy" in time, so may benefit less from structure in the dynamics. 
As with $h_k$, we will drop the explicit conditioning on $\pi$, but it is implied. We have the following result.
\begin{theorem}
\label{thm:return-conditioning}
Consider an action $a$, and assume that for any possible random return $z=Z(\tau)$ for some trajectory $\tau \sim \T(x, \pi)$ we have $h_z(a|x,z)>0$. Then we have: 
\begin{equation}
V^{\pi}(x) = \E_{\tau\sim\T(x, a, \pi)}\Big[ Z(\tau) \frac{\pi(a|x)}{h_z(a|x,Z(\tau))}\Big]. \label{eq:V-z}
\end{equation}
\end{theorem}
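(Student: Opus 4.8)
The plan is to prove the identity by a change-of-measure argument on the law of the return, mirroring exactly the importance-sampling intuition from the text. The two ingredients are: (i) the observation that $\T(x,a,\pi)$ is precisely $\T(x,\pi)$ conditioned on the event $\{A_0=a\}$ (legitimate since the hypothesis $h_z(a|x,z)>0$ for every realizable $z$ forces $\pi(a|x)>0$), and (ii) Bayes' rule applied to the pair $(A_0, Z(\tau))$ under $\tau\sim\T(x,\pi)$.

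First I would use Bayes' rule to rewrite the hindsight ratio as a likelihood ratio of returns,
\[
\frac{h_z(a|x,z)}{\pi(a|x)} \;=\; \frac{\P_{\tau\sim\T(x,\pi)}\big(Z(\tau)=z \mid A_0=a\big)}{\P_{\tau\sim\T(x,\pi)}\big(Z(\tau)=z\big)} \;=\; \frac{\P_{\tau\sim\T(x,a,\pi)}\big(Z(\tau)=z\big)}{\P_{\tau\sim\T(x,\pi)}\big(Z(\tau)=z\big)},
\]
so that $\tfrac{\pi(a|x)}{h_z(a|x,z)}$ is the Radon--Nikodym derivative of the law of $Z(\tau)$ under $\T(x,\pi)$ with respect to its law under $\T(x,a,\pi)$; the positivity hypothesis is exactly what guarantees this derivative is everywhere finite, i.e.\ that the former law is absolutely continuous with respect to the latter, so the change of measure is valid.

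Next I would observe that the integrand $Z(\tau)\,\tfrac{\pi(a|x)}{h_z(a|x,Z(\tau))}$ is a function of $\tau$ only through the scalar $Z(\tau)$. Hence $\E_{\tau\sim\T(x, a, \pi)}\big[\,\cdot\,\big]$ collapses to an integral against the law of $Z(\tau)$ under $\T(x,a,\pi)$; substituting the likelihood-ratio expression from the previous step, the two ``$\T(x,a,\pi)$-return-law'' factors cancel, leaving $\int z\,\P_{\tau\sim\T(x,\pi)}\big(Z(\tau)\in dz\big) = \E_{\tau\sim\T(x, \pi)}\big[Z(\tau)\big] = V^\pi(x)$, which is the claim. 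In the discrete-return case this is a one-line sum manipulation.

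The main obstacle is purely measure-theoretic: making the above rigorous when the return is a general (e.g.\ continuous) random variable. One then must replace pointwise conditional probabilities by regular conditional distributions / densities, check that $h_z(a|x,\cdot)$ is a bona fide version of the conditional law appearing in Bayes' rule, and invoke the positivity assumption to justify both the absolute-continuity step and the interchange implicit in the change of measure. Once absolute continuity is secured, the cancellation is routine, and the rest is bookkeeping with the defining relation $\T(x,a,\pi)=\T(x,\pi)\!\mid\!\{A_0=a\}$.
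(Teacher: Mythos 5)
Your proposal is correct and is essentially the paper's own argument: the same change of measure on the law of the return combined with Bayes' rule on the pair $(A_0, Z(\tau))$, merely run in the opposite direction (you start from the hindsight expectation and cancel back to $V^\pi(x)$, whereas the paper starts from $V^\pi(x)$ and inserts the likelihood ratio). Your added remarks on absolute continuity and regular conditional distributions are a fair observation about rigor that the paper's informal $\P(Z(\tau)=z)\,dz$ notation glosses over, but they do not change the substance of the proof.
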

The V- (rather than Q-) function form here has interesting properties that we will discuss in the next section. Mathematically, the two forms are analogous to derive, but the ratio is now flipped. Equations~\eqref{eq:V-z} and \eqref{eq:q-normal} imply the following expression for the advantage:
\vspace{-1ex}
\begin{align}
    A^\pi(x,a) & = \E_{\tau\sim\T(x, a, \pi)}\Big[ \Big(1 - \frac{\pi(a|x)}{h_z(a|x,Z(\tau))}\Big) Z(\tau) \Big]. \label{eq:adv-z}
\end{align}
The factor $c(a|x, Z) = 1 - \frac{\pi(a|x)}{h_z(a|x,Z)}$ expresses how much a single action $a$ contributed to obtaining a return $Z$. If other actions (drawn from $\pi(\cdot|x)$) would have yielded the same return,
$c(a|x, Z) = 0$, and the advantage is $0$.
If an action $a$ has made achieving $Z$  more likely,
then $c(a|x, Z)>0$, and conversly, if other actions would have contributed to achieving $Z$ more than $a$, 
then $c(a|x, Z)<0$. Hence, $c(a|x, Z)$ expresses the impact an action has on the environment, in terms of the return, if everything else (future decisions as well as randomness of the environment) is unchanged.

Both $h_\beta$ and $h_z$ can be learned online from sampled trajectories (see Sec.~\ref{sec:algs} for algorithms, and a discussion in Sec.~\ref{sec:learning-hca}). Finally, while we chose to focus on state and return conditioning, one could consider other options. For example, conditioning on the reward (instead of the state) at a future time $k$, or an embedding of (or part of) the future trajectory, could have interesting properties.

\subsection{Policy Gradients}
We now give a policy gradient theorem based on the new expressions of the value function.
\begin{theorem}
\label{thm:hca-pg}
Let $\pi_\theta$ be the policy parameterized by $\theta$, and $\beta=\gamma$. Then, the gradient of the value at some state $x_0$ is:
\begin{align}
   \nabla_\theta V^{\pi_\theta}(x_0) & = \E_{\tau\sim\T(x_0, \pi_\theta)}\Big[\sum_{k\geq 0} \gamma^k \sum_a\nabla\pi_\theta(a|X_k) Q^x(X_k,a) \Big]  \label{eq:pg-state}  \\
   & = \E_{\tau\sim\T(x_0, \pi_\theta)} \Big[ \sum_{k\geq 0} \gamma^k \nabla \log\pi_\theta(A_k|X_k) A^z(X_k, A_k) \Big], \label{eq:pg-return} \\
   Q^x(X_k,a) & \eqdef r(X_k, a) + \sum_{t\geq k+1} \gamma^{t-k} \frac{h_\beta(a|X_k, X_t)}{\pi_\theta(a|X_k)} R_t, \notag \\
    A^z(x,a) & \eqdef \Big (1 - \frac{\pi_\theta(a|x)}{h_z(a|x,Z(\tau_{k:\infty}))}\Big) Z(\tau_{k:\infty}). \notag
\end{align}
\end{theorem}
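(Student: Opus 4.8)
The plan is to derive both identities from the classical policy gradient theorem recalled in Section~2, namely $\nabla_\theta V^{\pi_\theta}(x_0)=\E_{\tau\sim\T(x_0,\pi_\theta)}\big[\sum_{k\ge 0}\gamma^k\sum_a\nabla\pi_\theta(a|X_k)\,Q^{\pi_\theta}(X_k,a)\big]$ (equivalently, with $A^{\pi_\theta}$ in place of $Q^{\pi_\theta}$, since the baseline $V^{\pi_\theta}(X_k)$ drops out of $\sum_a\nabla\pi_\theta(a|X_k)$), and then to swap the exact value functions for the one-sample estimators $Q^x$ and $A^z$, controlling the substitution via the tower property applied to the trajectory tail. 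Throughout, the key structural fact is that, by the Markov property, conditionally on $X_k=y$ the sub-trajectory $\tau_{k:\infty}=(X_k,A_k,R_k,X_{k+1},\dots)$ is distributed as $\T(y,\pi_\theta)$, and conditionally on $(X_k,A_k)=(x,a)$ it is distributed as $\T(x,a,\pi_\theta)$.

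For \eqref{eq:pg-state}, I would note that $Q^x(X_k,a)$ is a deterministic function of $a$ and of $\tau_{k:\infty}$, so the time-independent ($h_\beta$) version of Theorem~\ref{thm:q-xk} — obtained by combining $Q^\pi=A^\pi+V^\pi$ with \eqref{eq:adv-x-ind} — gives $\E[Q^x(X_k,a)\mid X_k=y]=Q^{\pi_\theta}(y,a)$. Substituting this into the classical formula (expanding $d^{\pi_\theta}(x|x_0)=\sum_k\gamma^k\P(X_k=x)$) and using the tower property yields \eqref{eq:pg-state}.

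For \eqref{eq:pg-return}, I would first rewrite $\sum_a\nabla\pi_\theta(a|X_k)\,A^{\pi_\theta}(X_k,a)=\E_{A_k\sim\pi_\theta(\cdot|X_k)}\big[\nabla\log\pi_\theta(A_k|X_k)\,A^{\pi_\theta}(X_k,A_k)\big]$ via the score-function identity; since $A_k\sim\pi_\theta(\cdot|X_k)$ along the sampled trajectory, this turns the gradient into $\E_\tau\big[\sum_k\gamma^k\nabla\log\pi_\theta(A_k|X_k)\,A^{\pi_\theta}(X_k,A_k)\big]$. Then $A^z(X_k,A_k)$, being a function of $(X_k,A_k)$ and $\tau_{k:\infty}$, has conditional expectation $A^{\pi_\theta}(X_k,A_k)$ by \eqref{eq:adv-z}; as $\gamma^k$ and $\nabla\log\pi_\theta(A_k|X_k)$ are measurable with respect to the conditioning, the tower property delivers \eqref{eq:pg-return}.

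The main obstacle is this substitution step: one must verify that the one-sample surrogates $Q^x$ and $A^z$, viewed as random variables depending on the trajectory tail, are conditionally unbiased for $Q^{\pi_\theta}$ and $A^{\pi_\theta}$ — precisely where Theorems~\ref{thm:q-xk} and~\ref{thm:return-conditioning} together with the Markov decomposition of the trajectory are invoked. The remaining points are routine: interchanging $\nabla_\theta$ with the sums and expectations (the series are absolutely convergent since $\gamma<1$ and rewards are bounded), and the degenerate case $\pi_\theta(a|X_k)=0$ in \eqref{eq:pg-state}, handled as in Theorem~\ref{thm:q-xk} by restricting to positive-probability actions (or assuming full support).
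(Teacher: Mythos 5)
Your proposal is correct and follows essentially the same route as the paper: start from the classical policy gradient theorem, substitute the hindsight estimators using the conditional unbiasedness guaranteed by Theorems~\ref{thm:q-xk} (time-independent form) and~\ref{thm:return-conditioning} together with the Markov decomposition of the trajectory tail, and use the score-function identity for the return version. The only cosmetic difference is that the paper keeps the advantage form \eqref{eq:adv-x} and explicitly cancels the action-independent terms via $\sum_a\nabla\pi_\theta(a|X_k)=0$, whereas you work with the $Q$-form directly; your treatment of the tower-property step is, if anything, more explicit than the paper's.
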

Note that the expression for state HCA in Eq.~\eqref{eq:pg-state} is written for all actions, rather than only the sampled one. Interestingly, this form does not require (or benefit from) a baseline. Contrary to the usual all-actions algorithm which uses the critic, the HCA reweighting allows us to use returns sampled from a particular starting action to obtain value estimates for all actions.

\section{Algorithms}\label{sec:algs}

Using the new policy gradient theorem, we will now give novel algorithms based on sampling the expectations \eqref{eq:pg-state} and \eqref{eq:pg-return}. Then, we will discuss the training of the relevant hindsight distributions.

\paragraph{State-Conditional HCA}
Consider a parametric representation of the policy $\pi(\cdot|x)$ and the future-state-conditional distribution $h_{\beta}(a|x,y)$, as well as the baseline $V$ and an estimate of the immediate reward $\hat{r}$. Generate $T$-step trajectories $\tau^T = (X_s, A_s, R_s)_{0\leq s\leq T}$. We can compose an estimate of the return for all actions $a$ (see Theorem~\ref{thm:q-x-nstep-bootstrap} in appendix): 
\begin{equation*}
    Q^{x}(X_s, a) \approx \hat{r}(X_s,a) + \sum_{t=s+1}^{T-1} \gamma^{t-s}\frac{h_\beta(a|X_s, X_t)}{\pi(a|X_s)} R_t + \gamma^{T-s} \frac{h_\beta(a|X_s, X_T)}{\pi(a|X_s)} V(X_T).
\end{equation*}
The algorithm proceeds by training $V(X_s)$ to predict the usual return $Z_s = \sum_{t=s}^{T-1} \gamma^{t-s} R_t+\gamma^{T-s} V(X_T)$ and $\hat{r}(X_s, A_s)$ to predict $R_s$ (square loss),
the hindsight distribution $h_{\beta}(a|X_s, X_t)$ 
to predict $A_s$ (cross entropy loss), and finally by updating the policy logits with $\sum_a Q^x(X_s,a) \nabla \pi(a \mid X_s)$. See Algorithm~\ref{alg:hca-s} in appendix for the detailed pseudocode.

\paragraph{Return-Conditional HCA}
Consider a parametric representation of the policy $\pi(\cdot|x)$ and the return-conditioned distribution $h_z(a|x,z)$. Generate full trajectories $\tau = (X_s, A_s, R_s)_{s\in{\mathbb N^+}}$ and compute the sampled advantage at each step:
\begin{equation*}
    A^z(X_s, A_s) = \Big(1 - \frac{\pi(A_s|X_s)}{h_z(A_s|X_s,Z_s)} \Big)Z_s,
\end{equation*}
where $Z_s = \sum_{t\geq s} \gamma^{t-s} R_t$. The algorithm proceeds by training the hindsight distribution $h_z(a|X_s, Z_s)$ to predict $A_s$ (cross entropy loss), and updating the policy gradient with $\nabla\log\pi(A_s\mid X_s) A^z(X_s,A_s)$. 
See Algorithm~\ref{alg:hca-z} in appendix for the detailed pseudocode. 

\paragraph{RL without value functions.} The return-conditional version lends itself to a particularly simple algoriTheorem In particular, we no longer need to learn the value function $V$ -- if $h_z(a|X_s, Z_s)$ is estimated well, using complete rollouts is feasible without variance issues. This takes our idea of reversing the direction of the learning question to the extreme, it is now {\em entirely} in hindsight.

The result is an actor-critic algorithm, where the usual baseline $V(X_s)$ is replaced by $b_s\eqdef \frac{\pi(A_s|X_s)}{h_z(A_s|X_s,Z_s)} Z_s.$ This baseline is strongly correlated to the return $Z_s$ (it is proportional to it), which is desirable since we would like to remove as much of the variance (due to the dynamics of the world, or the agent's own policy) as possible. 
The following proposition verifies that despite being correlated, this baseline does not introduce bias into the policy gradient.
\begin{proposition}
\label{prop:return-baseline}
The baseline $b_s = \frac{\pi(A_s|X_s)}{h_z(A_s|X_s,Z_s)} Z_s$ does not introduce any bias in the policy gradient:
\begin{align*}
\E_{\tau \sim \T(x_0,\pi)}\Big[\sum_s\gamma^s \nabla\log\pi(A_s|X_s) \big(  Z_s(\tau) -b_s\big) \Big] = \nabla V(x_0).
\end{align*}
\end{proposition}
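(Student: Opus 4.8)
The starting point is the sampled (REINFORCE) form of the policy gradient theorem \cite{Sutton:2000,williams1992simple}, namely $\nabla V^{\pi}(x_0)=\E_{\tau\sim\T(x_0,\pi)}\big[\sum_{s\geq 0}\gamma^s\,\nabla\log\pi(A_s|X_s)\,Z_s(\tau)\big]$ with $Z_s(\tau)=\sum_{t\geq s}\gamma^{t-s}R_t$; this is obtained from the all-actions identity recalled in the background by writing $\sum_a Q^\pi(X_s,a)\nabla\pi(a|X_s)=\sum_a\pi(a|X_s)\nabla\log\pi(a|X_s)Q^\pi(X_s,a)$ and replacing $Q^\pi(X_s,A_s)$ by its unbiased Monte Carlo estimate $Z_s(\tau)$. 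Given this, the Proposition is equivalent to the statement that the baseline term has zero expectation,
\[
\E_{\tau\sim\T(x_0,\pi)}\Big[\sum_{s\geq 0}\gamma^s\,\nabla\log\pi(A_s|X_s)\,b_s\Big]=0,
\qquad b_s=\frac{\pi(A_s|X_s)}{h_z(A_s|X_s,Z_s)}\,Z_s ,
\]
and so the whole argument reduces to establishing this.

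The core step is a term-by-term (in $s$) conditioning argument. Using $\nabla\log\pi(a|x)\,\pi(a|x)=\nabla\pi(a|x)$, the $s$-th summand equals $\gamma^s\,\tfrac{\nabla\pi(A_s|X_s)}{h_z(A_s|X_s,Z_s)}\,Z_s$. I would condition on the pair $(X_s,Z_s)$ and integrate over $A_s$ first. The key identity is that, by the Markov property and time-homogeneity of the chain, the conditional law of $A_s$ given $\{X_s=x,\,Z_s=z\}$ is exactly the return-conditional hindsight distribution: since, conditioned on $X_s=x$, the post-$s$ trajectory has law $\T(x,\pi)$ and $Z_s$ corresponds to $Z(\tau)$, the Definition of $h_z$ gives $\P(A_s=a\mid X_s=x,\,Z_s=z)=h_z(a\mid x,z)$. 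Hence
\[
\E\!\Big[\tfrac{\nabla\pi(A_s|X_s)}{h_z(A_s|X_s,Z_s)}\,Z_s\,\Big|\,X_s=x,\,Z_s=z\Big]
= z\sum_a h_z(a|x,z)\,\tfrac{\nabla\pi(a|x)}{h_z(a|x,z)}
= z\,\nabla\!\sum_a\pi(a|x)
= z\,\nabla 1 = 0 ,
\]
where the assumption $h_z(a|x,z)>0$ for all reachable $(a,z)$ makes every term well defined. By the tower property each summand has zero expectation; summing over $s$ (interchange justified by Fubini / dominated convergence, legitimate since $\gamma<1$ and rewards are bounded, so returns and ratios are integrable) yields $\E[\sum_s\gamma^s\nabla\log\pi(A_s|X_s)b_s]=0$, and combined with the first paragraph this proves the Proposition.

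The main obstacle is not conceptual but a matter of rigor: cleanly justifying the conditional-law identity $\P(A_s=a\mid X_s,Z_s)=h_z(a\mid X_s,Z_s)$. This requires invoking the Markov property to decouple the pre-$s$ history from $(A_s,Z_s)$, noting that $Z_s$ is a measurable functional of the post-$s$ trajectory, and using time-homogeneity to identify the resulting regular conditional probability with $h_z$ as defined through $\T(x,\pi)$; a secondary technical point is the interchange of the infinite sum over $s$ with the expectation and with $\nabla$. It is worth emphasizing that \emph{no} independence between $b_s$ and $A_s$ is used — indeed $b_s$ depends on $A_s$ through $Z_s$ — and it is precisely the cancellation of the importance weight $\pi/h_z$ under the hindsight law that makes this correlated baseline unbiased.
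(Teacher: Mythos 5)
Your proof is correct, but it takes a different route from the paper's. The paper splits off the standard policy-gradient term exactly as you do, but then handles the baseline term by conditioning on $(X_s,A_s)$ and invoking Theorem~\ref{thm:return-conditioning}: the inner expectation $\E_{\tau\sim\T(X_s,A_s,\pi)}\big[\tfrac{\pi(A_s|X_s)}{h_z(A_s|X_s,Z_s)}Z_s\big]$ collapses to $V^{\pi}(X_s)$, which no longer depends on $A_s$, so the term vanishes by $\sum_a\nabla\pi(a|X_s)=0$. You instead condition on $(X_s,Z_s)$, identify the conditional law of $A_s$ with $h_z(\cdot|X_s,Z_s)$, and cancel the importance weight directly so that each summand is zero already conditionally on the return. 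Both arguments rest on the same Bayes-rule cancellation of $\pi/h_z$ (your version essentially inlines the computation that proves Theorem~\ref{thm:return-conditioning} rather than citing it), and your key identity $\P(A_s=a\mid X_s=x,Z_s=z)=h_z(a|x,z)$ is exactly the Markov/time-homogeneity step you flag; it holds for the same reason the paper may write $h_z$ at time $s$ at all. What the paper's route buys is the interpretation that $b_s$ equals the classical baseline $V^{\pi}(X_s)$ in conditional expectation given $(X_s,A_s)$, which directly explains why the action-dependence is harmless; what your route buys is a more self-contained and arguably sharper statement, namely that the baseline term is zero conditionally on $(X_s,Z_s)$, making explicit that no independence between $b_s$ and $A_s$ is needed. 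Your handling of the technical caveats (positivity of $h_z$, interchange of sum and expectation) matches the paper's level of rigor.
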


\subsection{Learning Hindsight Distributions}
\label{sec:learning-hca}


We have given equivalent rewritings of the usual value functions in terms of the proposed hindsight distributions, and have motivated their properties, when they are accurate. Now, the question is if it is feasible to learn good estimates of those distributions from experience, and whether shifting the learning problem in this way is beneficial. The remainder of this section discusses this question, while the next one provides empirical evidence for the affirmative.

There are several conventional objects that could be learned to help with credit assignment: a value function, a forward model, or an inverse model over states. 
An accurate forward model allows one to compute value functions directly with no variance, and an accurate inverse model -- to perform precise credit assignment. However, learning such generative models accurately is difficult and has been a long-standing challenge in RL, especially in  high-dimensional state spaces. 
Interestingly, the hindsight distribution is a discriminative, rather than generative model, and is hence not required to model the full distribution over states. Additionally, the action space is usually much smaller than the state space, and so shifting the focus to actions potentially makes the problem much easier. When certain structure in the dynamics is present, learning hindsight distributions may be significantly easier still -- e.g. if the transition model is stochastic or the policy is changing, a particular $(x,a)$ can lead to many possible future states, but a particular future state can be explained by a small number of past actions. In general, learning $h_z$ and $h_\beta$ are supervised learning problems, so the new algorithms delegate some of the learning difficulty in RL to a supervised setting, for which many efficient approaches exist (e.g.~\cite{gutmann2010noise,van2018representation}).  




\begin{figure}
    \centering
    \includegraphics[scale=0.3]{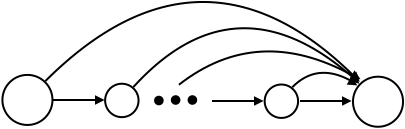}
    \hspace{1em}
    \includegraphics[scale=0.3]{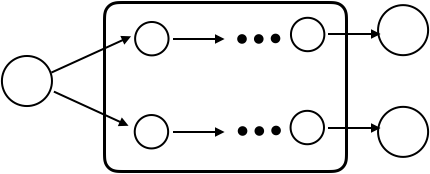}
    \hspace{1em}
    \includegraphics[scale=0.3]{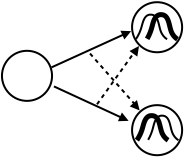}
    \caption{{\bf Left:} Shortcut. Each state has two actions, one transitions directly to the goal, the other to the next state of the chain. {\bf Center:} Delayed effect. Start state presents a choice of two actions, followed by an aliased chain, with the consequence of the initial choice apparent only in the final state. {\bf Right:} Ambiguous bandit. Each action transitions to a particular state with high probability, but to the other action's state with low probability. When the two states have noisy rewards, credit assignment to each action becomes challenging.}
    \label{fig:tasks}
\end{figure}

\begin{figure}
    \centering
    \includegraphics[scale=0.3]{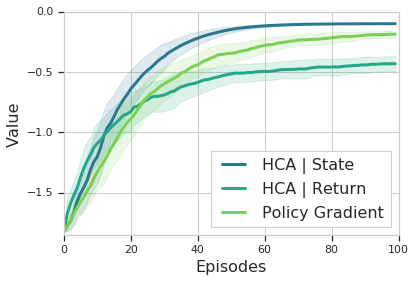}
    \includegraphics[scale=0.3]{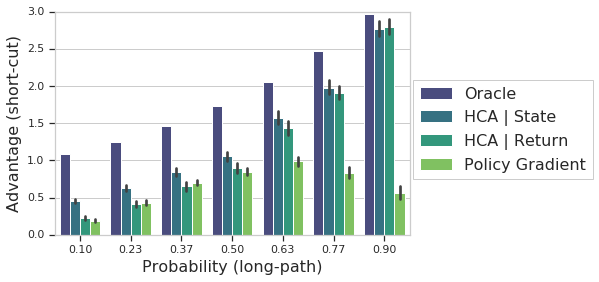}
    \caption{Shortcut. {\bf Left:} learning curves for $n=5$ with the policy between long and short paths initialized uniformly.  
    Explicitly considering the likelihood of reaching the final state allows state-conditioned HCA to more quickly adjust its policy. {\bf Right:} the advantage of the shortcut action estimated by performing $1000$ rollouts from a fixed policy. The $x$-axis depicts the policy probabilities of the actions on the long path. The oracle is computed analytically without sampling. When the shortcut action is unlikely and rarely encountered, it is difficult to obtain an accurate estimate of the advantage. HCA is consistently able to maintain larger (and more accurate) advantages.
    }
    \label{fig:shortcut}
\end{figure}

\begin{figure}
    \centering
    \includegraphics[scale=0.3]{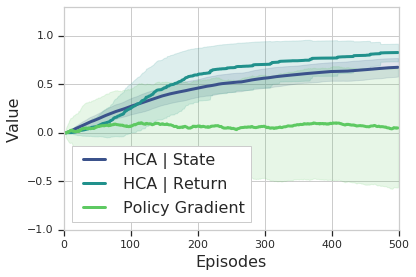}
    \includegraphics[scale=0.3]{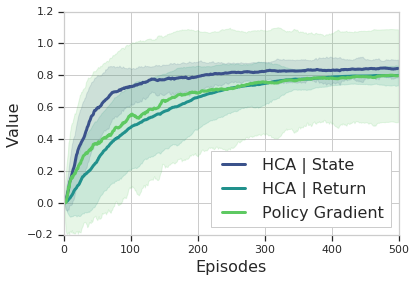}
    \includegraphics[scale=0.3]{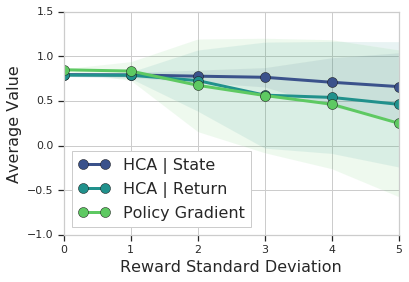}
    \caption{Delayed effect. {\bf Left:} Bootstrapping. The learning curves for $n=5$, $\sigma=0$, and a $3$-step return, which causes the agent to bootstrap in the partially observed region. As expected,  naive bootstrapping is unable to learn a good estimate. {\bf Middle:} Using full Monte Carlo returns (for $n=3$) overcomes partial observability, but is prone to noise. The plot depicts learning curves for the setting with added white noise of $\sigma=2$. {\bf Right.} The average performance w.r.t. different noise levels -- predictably, state HCA is the most robust.}
    \label{fig:umbrella}
\end{figure}

\begin{figure}
    \centering
    \includegraphics[scale=0.3]{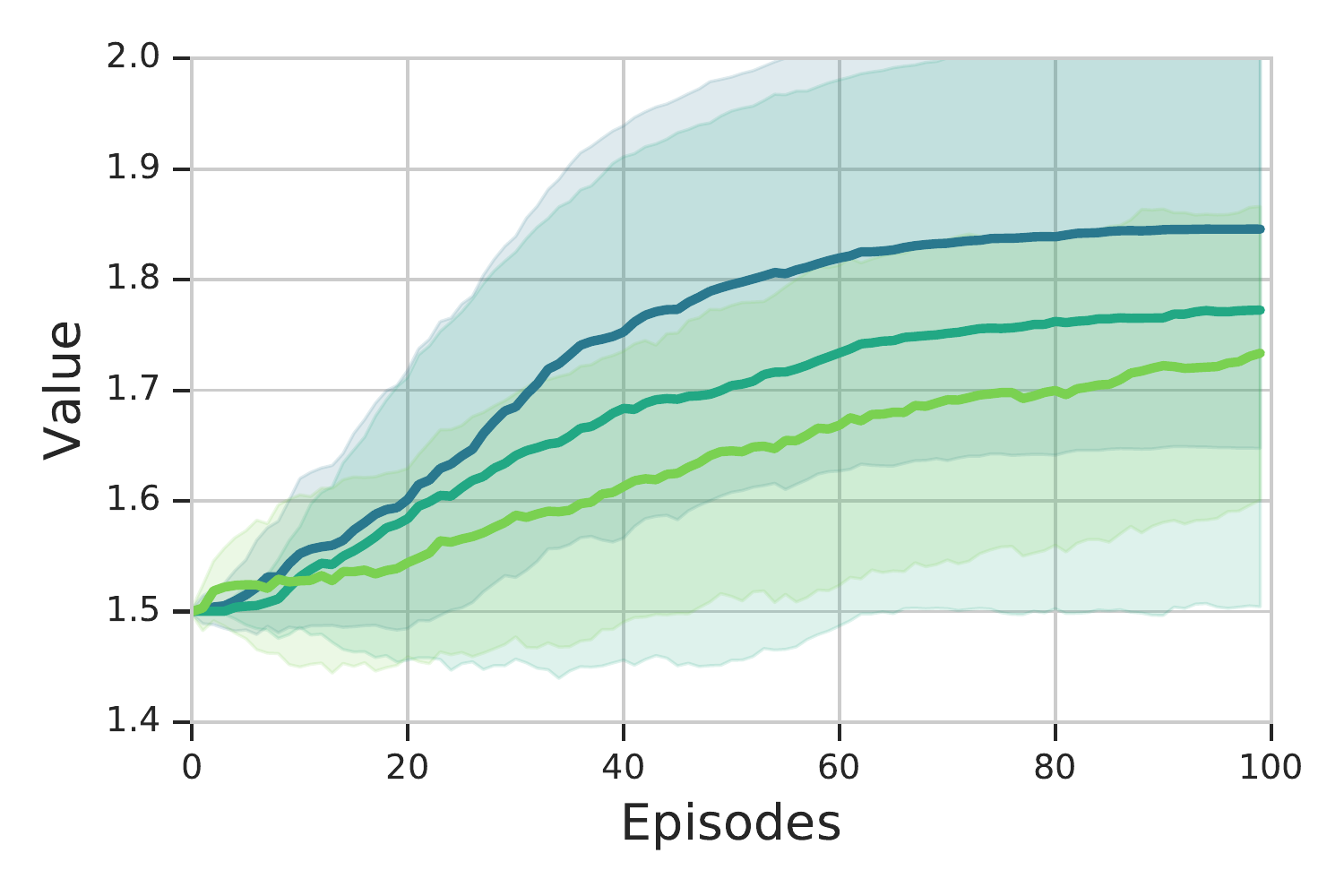}
    \includegraphics[scale=0.3]{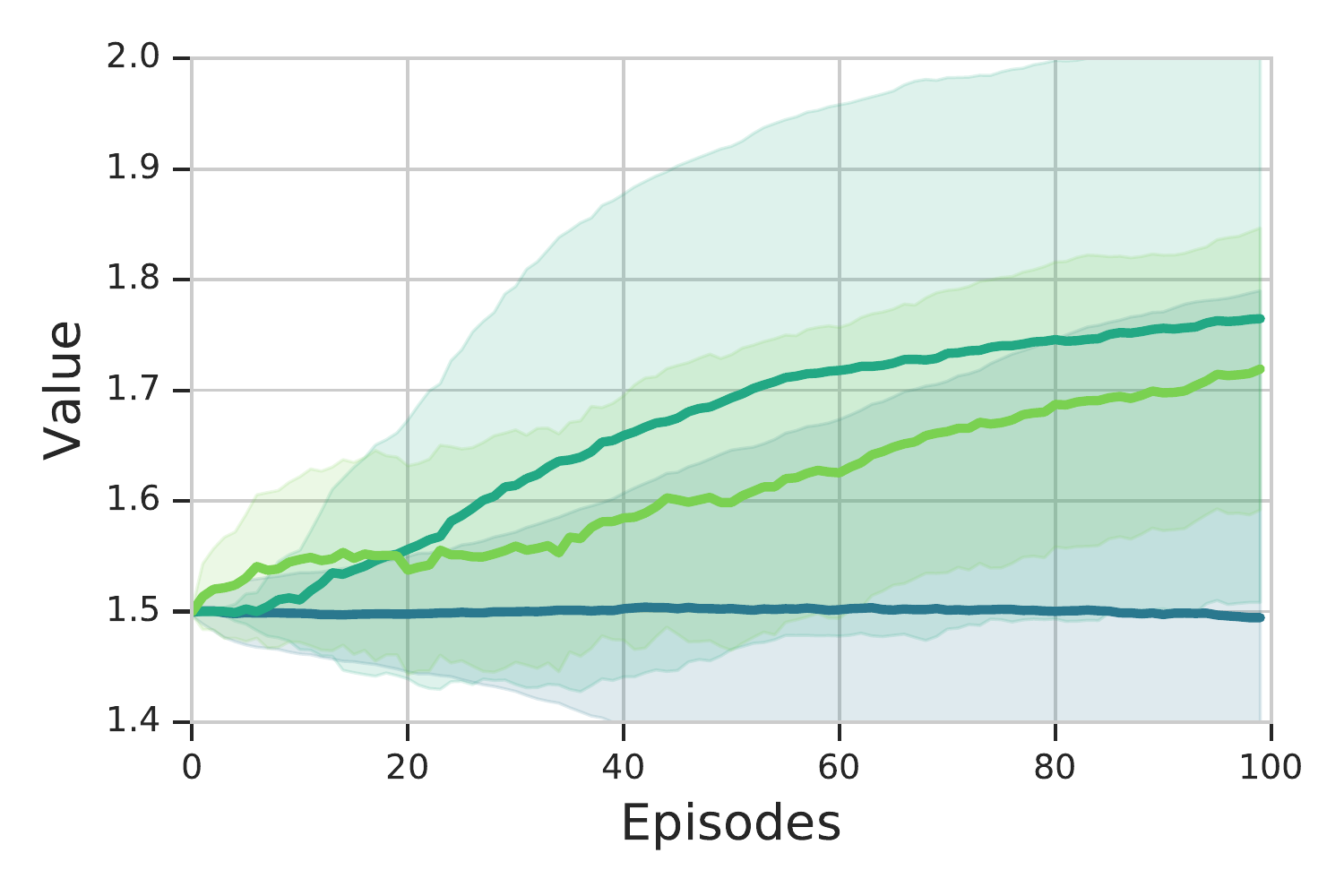}
    \includegraphics[scale=0.3]{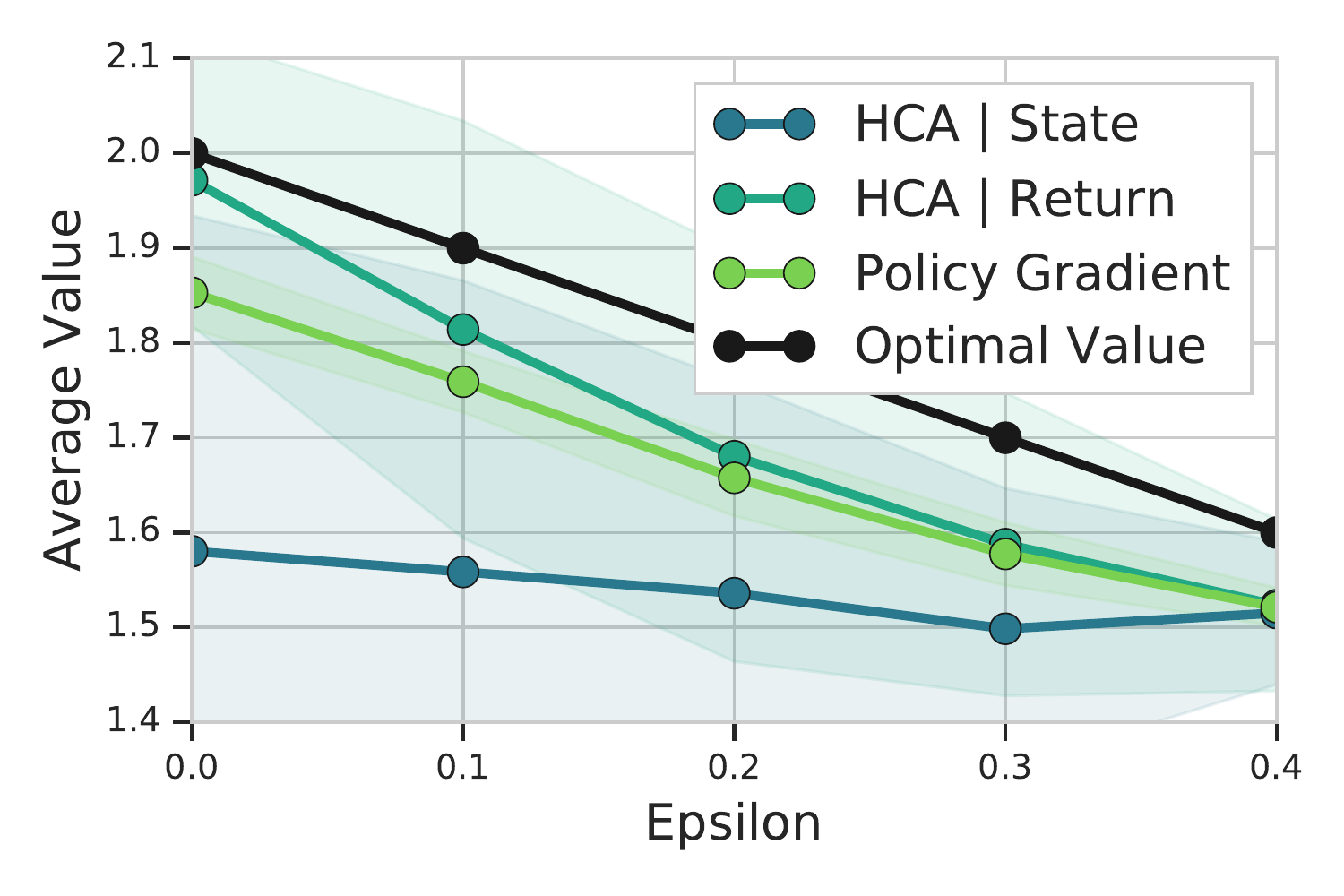}
    \caption{Ambiguous bandit with Gaussian rewards of means $1$, $2$, and standard deviation $1.5$. {\bf Left:} The state identity is observed. Both HCA methods improve on  PG. {\bf Middle:} The state identity is hidden, handicapping  state HCA, but return HCA continues to improve on PG. {\bf Right:} Average performance w.r.t. different $\epsilon$-s with Gaussian rewards of means $1$, $2$, and standard deviation $0.5$. Note that the optimal value itself decays in this case.} 
    \label{fig:bandit}
\end{figure}

\section{Experiments}
\label{sec:exps}

To empirically validate our proposal in a controlled way, we devised a set of diagnostic tasks that highlight issues 1-4, while also being representative of what occurs in practice (Fig.~\ref{fig:tasks}).
We then systematically verify the intuitions developed throughout the paper.  In all cases, we learn the hindsight distributions in tandem with the control policy. For each problem we compare HCA with state and return conditioning to standard baseline policy gradient, that is:  $n$-step advantage actor critic (with $n=\infty$ for Monte Carlo). All the results are an average of $100$ independent runs, with the plots depicting means and standard deviations. For simplicity we take $\gamma=1$ in all of the tasks. 


\paragraph{Shortcut.}
We begin with an example capturing the intuition from Fig.~\ref{fig:illustration} (left). Fig.~\ref{fig:tasks} (left) depicts a chain of length $n$ with a rewarding final state. At each step, one action takes a shortcut and directly transitions to the final state, while the other continues on the longer path, which may be more likely according to the policy. There is a per-step penalty (of $-1$), and a final reward of $1$. There is also a chance (of $0.1$) that the agent transitions to the absorbing state directly.

This problem highlights two issues: (1) the importance of counter-factual credit assignment (issue 4); when the long path is taken more frequently than the shortcut path, counter-factual updates become increasingly effective (see Fig.~\ref{fig:shortcut}, right) (2) the use of time as a proxy for relevance (issue 3) is shown to be only a heuristic, even in a fully-observable MDP. The relevance for the states along the chain is not accurately reflected in the long temporal distance between them and the goal state. In Fig.~\ref{fig:shortcut} we show that HCA is more effective at quickly adjusting the policy towards the shortcut action.

\paragraph{Delayed Effect.}
The next task instantiates the example from Fig.~\ref{fig:illustration} (right). Fig.~\ref{fig:tasks} (middle) depicts a POMDP, in which after the first decision, there is aliasing until the final state. This is a common case of partial observability, and is especially pertinent if the features are being learned. We show that (1) Bootstrapping naively is inadequate in this case (issue 2), but HCA is able to carry the appropriate information;\footnote{See the discussion in Appendix~\ref{app:bootstrap}} and (2) While Monte Carlo is able to overcome the partial observability, its performance deteriorates when intermediate reward noise is present (issue 1). HCA on the other hand is able to reduce the variance due to the irrelevant noise in the rewards. 

  Additionally, in this example the first decision is the most relevant choice, despite being the most temporally remote, once again highlighting that using temporal proximity for credit assignment is a heuristic (issue 3). 
  One of the final states is rewarding (with $r = 1$), the other penalizing (with $r = -1$), and the middle states contain white noise of standard deviation $\sigma$. Fig.~\ref{fig:umbrella} depicts our results. In this task, the return-conditional HCA has a more difficult learning problem, as it needs to correctly model the noise distribution to condition on, which is as difficult as learning the values naively, and hence performs similarly to the baseline.
  
\paragraph{Ambiguous Bandit.}
Finally, to emphasize that credit assignment can be challenging, even when it is not long-term, we consider a problem without a temporal component. Fig.~\ref{fig:tasks} (right) depicts a bandit with two actions, leading to two different states, whose reward functions are similar (here: drawn from overlapping Gaussian distributions), with some probability $\epsilon$ of crossover. The challenge here is due to variance (issue 1) and a lack of counter-factual updates (issue 4). It is difficult to tell whether an action was genuinely better, or just happened to be on the tail end of the distribution. This is a common scenario when bootstrapping with similar values.
Due to the explicit aim at modeling the distributions, the hindsight algorithms are more efficient (Fig.~\ref{fig:bandit} (left)).

To highlight the differences between the two types of hindsight conditioning, we introduce partial observability (issue 2), see Fig.~\ref{fig:bandit} (right). The return-conditional policy is still able to improve over policy gradient, but  state-conditioning now fails to provide informative conditioning (by construction).
%
%
\section{Related Work}
\label{sec:related-work}

Hindsight experience replay (HER)~\cite{andrychowicz2017hindsight} introduces the idea of off-policy learning about many goals from the same trajectory. The intuition is that regardless of what goal the trajectory was pursuing originally, {\em in hindsight} it, e.g., successfully found the one corresponding to its final state, and there is something to be learned. Rauber et al.~\cite{rauber2018hindsight} extend the same intuition to policy gradient algorithms, with goal-conditioned policies. Goyal et al.~\cite{goyal2019recall} also use goal conditioning and learn a backtracking model, which predicts the state-action pairs occurring on trajectories that end up in goal states. These works share our intuition of in hindsight using the same data to learn about many things, but in the context of goal-conditioned policies, while we essentially contrast conditional and unconditional policies, where the conditioning is on the extra outcome (state or return). Note that we never act w.r.t. the conditional policy, and it is used solely for credit assignment. Prioritized sweeping can be viewed as changing the sampling distribution with hindsight knowledge of the TD errors~\cite{moore1993prioritized}.



Another line of work that aims to propagate credit efficiently backward in time is the temporal value transport algorithm~\cite{hung2018optimizing}, in which an attention mechanism over memory is used to jump over parts of a trajectory that are irrelevant for the rewards obtained. While demonstrated on challenging problems, that method is biased; a promising direction for future research would be to apply our unbiased hindsight mechanism with past states chosen by such an attention mechanism.

A large number of variance reduction techniques have been applied in
RL, e.g. using learned value functions as critics, and other control
variates (e.g.~\cite{weber2019credit}). When a model of the environment is available, it can be used to reduce variance. 
Rollouts from the same state fill the same role in policy gradients~\cite{schulman2015trust}.
Differentiable system dynamics allow low-variance estimates of the
Q-value gradient by using the pathwise derivative estimator,
effectively backpropagating the gradient of the objective along
trajectories (e.g.~\cite{schulman2015stochastic,heess2015learning,henaff2017model}). 
In stochastic systems this requires knowledge of the environment noise. To bypass this, Heess et al.~\cite{heess2015learning} \emph{infer} the noise given an observed trajectory. Buesing et al.~\cite{buesing2018woulda} apply this idea to POMDPs, where it can be viewed as reasoning about events in hindsight. They use a structural causal model of the dynamics and infer the posterior over latent causes from empirical trajectories. Using an empirical rather than a learned distribution over latent causes can reduce bias and, together with the (deterministic) model of the system dynamics, allows exploring the effect of alternative action choices for an observed trajectory. 

Inverse models similar to the ones we use appear, for instance, in variational intrinsic control~\cite{gregor2016variational} (see also e.g.~\cite{hausman2018learning}). However, in our work, the inverse model serves as a way of determining the influence of an action on a future outcome, whereas the work in \cite{gregor2016variational,hausman2018learning} aims to use the inverse model to derive an intrinsic reward for training policies in which actions influence the future observations.

\section{Closing}
We proposed a new family of algorithms that explicitly consider the question of credit assignment as a part of, or instead of, estimating the traditional value function. The proposed estimators come with new properties, 
and as we validate empirically, are  able to address some of the key issues in credit assignment. Investigating the scalability of these algorithms in the deep reinforcement learning setting is an exciting problem for future research.

\subsection*{Acknowledgements}
The authors thank Joseph Modayil for reviews of earlier manuscripts, Theo Weber for several insightful suggestions, and the anonymous reviewers for their useful feedback.

\bibliography{bibliography}
\bibliographystyle{plain}

\newpage 
\appendix

\section{Proofs}


\subsection{Proof of Theorem~\ref{thm:q-xk}}
\label{thm:q-xk-proof}

\begin{lemma}
\label{lem:bayes}
For any initial state $x$, a state $y$ that can occur on a trajectory $\tau\sim \T(x, \pi)$, that is: $\P_{\tau\sim \T(x, \pi)}(X_k=y) \neq 0$ for some $k$ an action $a$ for which $\pi(a|x) \neq 0$, we have:
\begin{equation}
\frac{h_k(a|x,y)}{\pi(a|x)} = \frac{\P_{\tau\sim \T(x, \pi)}(X_k=y | A_0 = a)}{\P_{\tau\sim \T(x, \pi)}(X_k=y)}. \label{eq:bayes}
\end{equation}
\end{lemma}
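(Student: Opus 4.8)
The statement to prove, Lemma~\ref{lem:bayes}, is a direct application of Bayes' rule, so the plan is essentially bookkeeping with conditional probabilities. I would first fix the relevant probability space: all probabilities are taken over trajectories $\tau \sim \T(x,\pi)$, where by construction $A_0 \sim \pi(\cdot \mid x)$, so $\P_{\tau \sim \T(x,\pi)}(A_0 = a) = \pi(a \mid x)$. I would fix the index $k$ for which $\P_{\tau\sim\T(x,\pi)}(X_k = y) \neq 0$, so that conditioning on $\{X_k = y\}$ is well-defined, and note that $\pi(a\mid x) \neq 0$ makes conditioning on $\{A_0 = a\}$ well-defined too.

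Next I would expand the joint probability $\P(A_0 = a,\, X_k = y)$ in the two possible ways. On one hand, by the definition of $h_k$ in Eq.~\eqref{eq:state-cond-def},
\[
\P(A_0 = a,\, X_k = y) = \P(A_0 = a \mid X_k = y)\,\P(X_k = y) = h_k(a \mid x, y)\,\P(X_k = y).
\]
On the other hand, conditioning on the first action,
\[
\P(A_0 = a,\, X_k = y) = \P(X_k = y \mid A_0 = a)\,\P(A_0 = a) = \P(X_k = y \mid A_0 = a)\,\pi(a \mid x).
\]
Equating the two expressions and dividing by $\pi(a\mid x)\,\P(X_k = y)$ — both nonzero by hypothesis — yields exactly Eq.~\eqref{eq:bayes}.

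There is essentially no obstacle here; the only thing to be careful about is the well-definedness of the conditional events (handled by the two nonvanishing hypotheses) and the remark, already made in the main text, that $\P(X_k = y \mid A_0 = a) = \P_{\tau \sim \T(x,a,\pi)}(X_k = y)$, which follows because fixing $A_0 = a$ and then following $\pi$ is precisely the process $\T(x,a,\pi)$. With Lemma~\ref{lem:bayes} in hand, the proof of Theorem~\ref{thm:q-xk} will then proceed by writing $Q^\pi(x,a) = r(x,a) + \sum_{k \geq 1}\gamma^k \E_{\tau \sim \T(x,a,\pi)}[R_k]$, expanding each expectation as a sum over states $y$ weighted by $\P_{\tau\sim\T(x,a,\pi)}(X_k = y)$, and using Eq.~\eqref{eq:bayes} to replace this by $\frac{h_k(a\mid x,y)}{\pi(a\mid x)}\P_{\tau\sim\T(x,\pi)}(X_k=y)$, which reassembles into the stated expectation under $\T(x,\pi)$ — but that is the next step, beyond the lemma itself.
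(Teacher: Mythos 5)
Your proof is correct and matches the paper's argument: the paper applies Bayes' rule directly to $\P_{\tau\sim\T(x,\pi)}(X_k=y\mid A_0=a)$, which is exactly your two-way expansion of the joint probability $\P(A_0=a, X_k=y)$ followed by division by the nonzero marginals. The only difference is cosmetic — you spell out the well-definedness of the conditionings, which the paper leaves implicit.
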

\begin{proof}
From Bayes' rule, we have:
\begin{align*}
\P_{\tau\sim \T(x, \pi)}(X_k=y|A_0=a) &= \frac{\P_{\tau\sim \T(x, \pi)}(A_0=a|X_k=y)\P_{\tau\sim \T(x, \pi)}(X_k=y)}{\P_{\tau\sim \T(x, \pi)}(A_0=a)},
\\
&=\frac{\P_{\tau\sim \T(x, \pi)}(X_k=y)h_k(a|x,y)}{\pi(a|x)}.
\end{align*}
\end{proof}

\begin{proof}[Proof of Theorem~\ref{thm:q-xk}]
From the definition of the Q-function for a state-action pair $(x,a)$, we have
\begin{equation}
\label{eq:Q-def-expand}
Q^\pi(x,a)  = r(x,a) + \sum_{k\geq 1}\sum_{y\in \mathcal{X}}\gamma^k \P_{\tau\sim \T(x, \pi)}(X_k=y|A_0=a) r^\pi(y),
\end{equation}
where $r^\pi(y) = \sum_{a\in\mathcal{A}}\pi(a|y)r(y,a)$.

Combining Eq.~\eqref{eq:bayes} with Eq.~\eqref{eq:Q-def-expand} we deduce
\begin{align*}
Q^\pi(x,a)  &= r(x,a) +\sum_{y\in \mathcal{X}}\sum_{k\geq1} \gamma^k\P_{\tau\sim \T(x, \pi)}(X_k=y) \frac{h_k(a|x, y)}{\pi(a|x)} r^\pi(y),
\\
& = r(x,a) +\mathbb{E}_{\tau\sim \T(x, \pi)}\left[\sum_{k\geq1}\gamma^k \frac{h_k(a|X_k,x)}{\pi(a|x)} R_k\right].
\end{align*}
\end{proof}

\subsection{Proof of Theorem~\ref{thm:return-conditioning}}
\begin{proof}
For any action $a$, the value function writes as
\beqan
    V^\pi(x)   &=& \E_{\tau\sim\T(x, \pi)}\big[ Z(\tau) \big],\\
           &=& \int_{z} z \P_{\tau\sim\T(x, \pi)}(Z(\tau) = z)  dz, \\
           &=& \int_{z} z \frac{\P_{\tau\sim\T(x, \pi)}(Z(\tau) = z) }{\P_{\tau\sim\T(x, a, \pi)}(Z(\tau) = z)}\P_{\tau\sim\T(x, a, \pi)}(Z(\tau) = z) dz, \\
           &=& \int_{z} z \frac{\P_{\tau\sim\T(x, \pi)}(Z(\tau) = z) }{\P_{\tau\sim\T(x, \pi)}(Z(\tau) = z | A_0 = a) }\P_{\tau\sim\T(x, a, \pi)}(Z(\tau) = z) dz, \\
           &\overset{(i)}{=}& \int_{z} z \frac{\P_{\tau\sim\T(x, \pi)}(A_0 = a) }{\P_{\tau\sim\T(x, \pi)}(A_0=a|Z(\tau) = z) }\P_{\tau\sim\T(x, a, \pi)}(Z(\tau) = z) dz, \\
           &=& \int_{z} z \frac{\pi(a|x)}{h_z(a| x, z)} \P_{\tau\sim\T(x, a, \pi)}(Z(\tau) = z) dz, \\
           &=& \E_{\tau\sim\T(x, a, \pi)}\Big[ Z(\tau) \frac{\pi(a|x)}{h_z(a | x, Z(\tau))}\Big],
\eeqan
where $(i)$ follows from Bayes' rule.  

\end{proof}

\subsection{Proof of Theorem~\ref{thm:hca-pg}}
\begin{proof}
Using \eqref{eq:adv-x}, we have:
\begin{align*}
    \nabla_\theta V^{\pi_\theta}(x_0) &= \E_{\tau\sim\T(x_0, \pi_\theta)}\Big[\sum_a \sum_{k\geq 0} \gamma^k \nabla \pi_\theta(a|X_k) A^\pi(X_k,a) \Big]\\
    & = \E_{\tau\sim\T(x_0, \pi_\theta)}\Big[\sum_a \sum_{k\geq 0} \gamma^k \nabla \pi_\theta(a|X_k) \Big( r(X_k,a) - r^{\pi_\theta}(X_k) + \sum_{t\geq k+1}  \gamma^{t-k} \Big(\frac{h_\beta(a|X_k, X_t)}{\pi_\theta(a|X_k)} - 1\Big) R_t\Big) \Big] \\
    & = \E_{\tau\sim\T(x_0, \pi_\theta)}\Big[\sum_a \sum_{k\geq 0} \gamma^k \nabla \pi_\theta(a|X_k) \Big( r(X_k,a) + \sum_{t\geq k+1}  \gamma^{t-k} \frac{h_\beta(a|X_k, X_t)}{\pi_\theta(a|X_k)} R_t\Big) \Big]. 
\end{align*}
where the third equality is due to $\sum_a \nabla \pi_\theta(a|X_k) f(X_k) = f(X_k) \sum_a \nabla \pi_\theta(a|X_k) = 0$, for $f(X_k) = r^{\pi_\theta}(X_k) + \sum_{t\geq k+1} \gamma^{t-k} R_t$.

Similarly, for the return version and any action $a$, we have:
\begin{align*}
    \nabla_\theta V^{\pi_\theta}(x_0) &= \E_{\tau\sim\T(x_0, \pi_\theta)}\Big[\sum_a \sum_{k\geq 0} \gamma^k \nabla \pi_\theta(a|X_k) A^\pi(X_k,a) \Big]\\
    & = \E_{\tau\sim\T(x_0, \pi_\theta)}\Big[\sum_a \sum_{k\geq 0} \gamma^k \pi(a|X_k) \nabla \log \pi_\theta(a|X_k) A^\pi(X_k,a) \Big] \\
    & = \E_{\tau\sim\T(x_0, \pi_\theta)}\Big[\sum_{k\geq 0} \gamma^k \nabla \log \pi_\theta(A_k|X_k) A^\pi(X_k,A_k) \Big] \\
   & = \E_{\tau\sim\T(x_0, \pi_\theta)}\Big[\sum_{k\geq 0} \gamma^k \nabla \log \pi_\theta(A_k|X_k) \Big(1 - \frac{\pi(A_k|X_k)}{h_z(A_k|X_k,Z(\tau_{k:\infty}))}\Big) Z(\tau_{k:\infty})\Big].
\end{align*}

\end{proof}

\subsection{Proof of Proposition~\ref{prop:return-baseline}}
\begin{proof}
We have:
\begin{align*}
&\E_{\tau \sim \T(x_0,\pi)}\Big[\sum_s\gamma^s \nabla\log\pi(A_s|X_s) \big(  Z_s(\tau) -b_s\big) \Big]\\
=& \E_{\tau \sim \T(x_0,\pi)}\Big[\sum_s\gamma^s \nabla\log\pi(A_s|X_s) Q^{\pi}(X_s, A_s) \Big]
 - \E_{\tau \sim \T(x_0,\pi)} \Big[\nabla\log\pi(A_s|X_s) b_s \Big], \\
=& \nabla V(x_0) - \E_{\tau \sim \T(x_0,\pi)}\Big[\nabla\log\pi(A_s|X_s) \frac{\pi(A_s|X_s)}{h_z(A_s|X_s,Z_s(\tau))} Z_s(\tau) \Big], \\
\overset{(i)}{=}&\nabla V(x_0) - \E_{\tau \sim \T(x_0,\pi)}\Big[\E_{A_s\sim\pi(\cdot|X_s)}\Big[\nabla\log\pi(A_s|X_s)  \underbrace{ \E_{\tau\sim\T(X_s, A_s,\pi)}\Big[\frac{\pi(A_s|X_s)}{h_z(A_s|X_s,Z_s(\tau))} Z_s(\tau)\Big]}_{V^{\pi}(X_s)}\Big]\Big], \\
=&\nabla V(x_0)-\E_{\tau \sim \T(x_0,\pi)}\Big[ V^{\pi}(X_s)\sum_{a\in\mathcal A}\nabla\pi(a|X_s)\Big],\\
=&\nabla V(x_0).
\end{align*}
where $(i)$ follows from Theorem~\ref{thm:return-conditioning}. 
\end{proof}

\section{Other variants}
Analogously to Theorems~\ref{thm:q-xk} and \ref{thm:return-conditioning}, we can obtain the V- and Q-functions for state and return conditioning, respectively. We have:

\begin{theorem}
\label{thm:v-xk}
Consider an action $a$ for which $\pi(a|x) > 0$ and $\P_{\tau\sim \T(x, \pi)}(X_k=y | A_0 = a)> 0$ for any state $X_k$ sampled on $\tau\sim\T(x,a,\pi)$:
 \begin{equation}
      V^\pi(x) = \E_{\tau\sim\T(x,a,\pi)}\Big[ \sum_{k\geq 0}\gamma^k \frac{\pi(a|x)}{h_k(a|x, X_k)} R_k\Big]. \notag
 \end{equation}
\end{theorem}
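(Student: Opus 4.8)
The plan is to mirror the proof of Theorem~\ref{thm:q-xk}, but with the importance-sampling ratio inverted, exactly as Theorem~\ref{thm:return-conditioning} inverts the ratio of Theorem~\ref{thm:q-xk}. The target identity is a rewriting of $V^\pi(x)$ as an expectation over trajectories that \emph{do} start with the fixed action $a$, with each reward $R_k$ reweighted by $\pi(a|x)/h_k(a|x,X_k)$. So the object to manipulate is $V^\pi(x) = \sum_{k\geq 0}\sum_{y}\gamma^k \P_{\tau\sim\T(x,\pi)}(X_k=y)\, r^\pi(y)$, the standard state-occupancy expansion of the value function along the \emph{unconditional} policy (note $R_0$ is included here since we are not conditioning the first action, so the $k\geq 0$ sum rather than peeling off $r(x,a)$).

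First I would write $V^\pi(x)$ in this occupancy form. Then the key algebraic step is to insert the ratio $\tfrac{\P_{\tau\sim\T(x,\pi)}(X_k=y|A_0=a)}{\P_{\tau\sim\T(x,\pi)}(X_k=y|A_0=a)}=1$ and regroup: $\P_{\tau\sim\T(x,\pi)}(X_k=y) = \P_{\tau\sim\T(x,\pi)}(X_k=y|A_0=a)\cdot\frac{\P_{\tau\sim\T(x,\pi)}(X_k=y)}{\P_{\tau\sim\T(x,\pi)}(X_k=y|A_0=a)}$. By Lemma~\ref{lem:bayes} the second factor is exactly $\pi(a|x)/h_k(a|x,y)$ (the reciprocal of Eq.~\eqref{eq:bayes}), which is where the positivity hypothesis $\P_{\tau\sim\T(x,\pi)}(X_k=y|A_0=a)>0$ is needed so the division is legitimate, and $\pi(a|x)>0$ is needed so that $h_k(a|x,y)>0$ and Lemma~\ref{lem:bayes} applies. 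Substituting gives $V^\pi(x) = \sum_{k\geq 0}\sum_y \gamma^k \P_{\tau\sim\T(x,\pi)}(X_k=y|A_0=a)\,\frac{\pi(a|x)}{h_k(a|x,y)}\,r^\pi(y)$, and since $\P_{\tau\sim\T(x,\pi)}(X_k=y|A_0=a)=\P_{\tau\sim\T(x,a,\pi)}(X_k=y)$, recognizing the sum over $y$ and $k$ as an expectation over $\tau\sim\T(x,a,\pi)$ (with $r^\pi(X_k)$ replaced by its sample $R_k$, using the tower property over the action at step $k$) yields the claimed formula.

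I do not expect a serious obstacle; the only subtle points are bookkeeping ones. One is making sure the $k=0$ term behaves correctly: there $X_0=x$ deterministically, $h_0(a|x,x)=\pi(a|x)$, so the ratio is $1$ and the term contributes $r^\pi(x)$, which matches $\E_{\tau\sim\T(x,a,\pi)}[R_0]$ only in expectation once we have already taken the expectation over $A_0$ under $\pi$ — but here $A_0=a$ is fixed, so in fact $\E_{\tau\sim\T(x,a,\pi)}[\tfrac{\pi(a|x)}{h_0(a|x,X_0)}R_0]=\E[R_0|X_0=x,A_0=a]=r(x,a)$, \emph{not} $r^\pi(x)$; this looks like a discrepancy but is resolved because replacing $r^\pi(X_k)$ by the sampled $R_k$ inside the expectation is only valid after conditioning appropriately, and the $k=0$ occupancy term in the $V^\pi$ expansion is genuinely $r^\pi(x)$ whereas the sampled-reward expression under $\T(x,a,\pi)$ reweighted by $\pi(a|x)/h_0$ must be checked to equal $r^\pi(x)$ as well — which it does, since $h_0$ integrates correctly. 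The cleanest route that sidesteps this is to keep everything at the level of $r^\pi(y)$ and occupancy measures until the very end, and only then collapse to sampled rewards via the identity $\E_{\tau\sim\T(x,a,\pi)}[g(X_k)] = \sum_y \P_{\tau\sim\T(x,a,\pi)}(X_k=y)g(y)$ applied to $g(y)=\tfrac{\pi(a|x)}{h_k(a|x,y)}r^\pi(y)$, together with $\E[R_k|X_k=y,\pi]=r^\pi(y)$. The other point to state is interchange of sum and expectation (discount $\gamma<1$ and bounded rewards make this routine). Everything else is a transcription of the Theorem~\ref{thm:q-xk} argument with the roles of the two measures swapped.
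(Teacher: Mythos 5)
Your route is the same as the paper's: expand $V^\pi(x)$ over the state-occupancy measure as $\sum_{k}\sum_y\gamma^k\P_{\tau\sim\T(x,\pi)}(X_k=y)\,r^\pi(y)$, multiply and divide by $\P_{\tau\sim\T(x,\pi)}(X_k=y\mid A_0=a)$, apply the flipped form of Lemma~\ref{lem:bayes} (which is exactly how the positivity hypotheses enter), and recognize the result as an expectation under $\T(x,a,\pi)$. All of that is fine for $k\geq 1$, and matches the paper's proof step for step.

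The one place your write-up goes wrong is precisely the $k=0$ term you flagged. You correctly compute $\E_{\tau\sim\T(x,a,\pi)}\bigl[\tfrac{\pi(a|x)}{h_0(a|x,X_0)}R_0\bigr]=r(x,a)$ while the $k=0$ term of the occupancy expansion of $V^\pi(x)$ is $r^\pi(x)$, but you then assert the discrepancy ``is resolved \ldots since $h_0$ integrates correctly,'' which is not an argument — and the discrepancy is real. Under $\T(x,a,\pi)$ the first action is pinned to $a$, so $R_0=r(x,a)$ deterministically, and the $h_0$ ratio is identically $1$ at $X_0=x$, so no reweighting can recover $r^\pi(x)$. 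Concretely, a one-step problem with two actions, $\pi$ uniform, $r(x,a)=1$, $r(x,b)=0$ gives left-hand side $1/2$ and right-hand side $1$. The collapse from $r^\pi(y)$ to the sampled $R_k$ requires $\E_{\tau\sim\T(x,a,\pi)}[R_k\mid X_k=y]=r^\pi(y)$, which holds for $k\geq 1$ (where $A_k\sim\pi(\cdot\mid X_k)$) but fails at $k=0$. The repair is to treat the first reward separately, e.g. state the identity as $V^\pi(x)=r^\pi(x)+\E_{\tau\sim\T(x,a,\pi)}\bigl[\sum_{k\geq 1}\gamma^k\tfrac{\pi(a|x)}{h_k(a|x,X_k)}R_k\bigr]$ (mirroring how Theorem~\ref{thm:q-xk} peels off $r(x,a)$), or to keep $r^\pi(X_k)$ rather than $R_k$ inside the expectation. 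Note that the paper's own proof makes the same silent leap in its final equality, so your suspicion was well placed; you just should not have talked yourself out of it.
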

\begin{proof}
We can flip the result of Lemma~\ref{lem:bayes} for actions $a$ for which $\pi(a|x) > 0$ and $\P_{\tau\sim \T(x, \pi)}(X_k=y | A_0 = a) > 0$.
\begin{equation}
\frac{\pi(a|x)}{h_k(a|x,y)} = \frac{\P_{\tau\sim \T(x, \pi)}(X_k=y)}{\P_{\tau\sim \T(x, \pi)}(X_k=y | A_0 = a)}. \label{eq:bayes-flipped}
\end{equation}

Let $r^\pi(y) = \sum_{a\in\mathcal{A}}\pi(a|y)r(y,a)$. We have 
\begin{align*}
    V^\pi(x) & = \E_{\tau\sim\T(x,\pi)}\Big[ \sum_{k\geq 0}\gamma^k R_k \Big] \\
    & = \sum_{k\geq 0}\sum_{y\in \mathcal{X}} \gamma^k \P_{\tau\sim \T(x, \pi)}(X_k=y) r^\pi(y) \\
    & = \sum_{k\geq 0}\sum_{y\in \mathcal{X}} \gamma^k \P_{\tau\sim \T(x, \pi)}(X_k=y | A_0=a) \frac{\P_{\tau\sim \T(x, \pi)}(X_k=y)}{\P_{\tau\sim \T(x, \pi)}(X_k=y | A_0=a)} r^\pi(y) \\
    & = \sum_{k\geq 0}\sum_{y\in \mathcal{X}} \gamma^k \P_{\tau\sim \T(x, \pi)}(X_k=y | A_0=a) \frac{\pi(a|x)}{h_k(a|x,y)} r^\pi(y) \\
    & = \E_{\tau\sim\T(x,a,\pi)}\Big[ \sum_{k\geq 0}\gamma^k \frac{\pi(a|x)}{h_k(a|x, X_k)} R_k\Big].
\end{align*}
\end{proof}

\begin{theorem}
\label{thm:q-Z}
Consider an action $a$ for which $\pi(a|x)>0$. We have:
 \vspace{-1ex}
 \begin{equation}
 Q^{\pi}(x,a) = \E_{\tau\sim\T(x, \pi)}\Big[ Z(\tau) \frac{h_z(a|x,Z(\tau))}{\pi(a|x)}\Big].
 \end{equation}
\end{theorem}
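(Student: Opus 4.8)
The plan is to mirror the derivation of Theorem~\ref{thm:return-conditioning}, but with the importance-sampling ratio in the ``unflipped'' direction, so that the change of measure goes from $\T(x,a,\pi)$ to $\T(x,\pi)$ rather than the other way around. Concretely, I would start from the definition $Q^\pi(x,a) = \E_{\tau\sim\T(x,a,\pi)}[Z(\tau)]$ and write this as an integral over possible return values $z$, weighting $z$ by $\P_{\tau\sim\T(x,a,\pi)}(Z(\tau)=z)$.

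The key manipulation is then to insert the ratio $\frac{\P_{\tau\sim\T(x,\pi)}(Z(\tau)=z)}{\P_{\tau\sim\T(x,\pi)}(Z(\tau)=z)}$ and reorganize so that the integrand becomes $z \cdot \frac{\P_{\tau\sim\T(x,a,\pi)}(Z(\tau)=z)}{\P_{\tau\sim\T(x,\pi)}(Z(\tau)=z)} \cdot \P_{\tau\sim\T(x,\pi)}(Z(\tau)=z)$. The first factor after $z$ is exactly the likelihood ratio $\frac{\P(Z=z\mid X_0=x,A_0=a,\pi)}{\P(Z=z\mid X_0=x,\pi)}$, which by Bayes' rule equals $\frac{\P(A_0=a\mid Z=z,X_0=x,\pi)}{\P(A_0=a\mid X_0=x,\pi)} = \frac{h_z(a|x,z)}{\pi(a|x)}$. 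Recognizing the remaining $\int z\,(\cdot)\,\P_{\tau\sim\T(x,\pi)}(Z(\tau)=z)\,dz$ as an expectation over $\tau\sim\T(x,\pi)$ then yields $Q^\pi(x,a) = \E_{\tau\sim\T(x,\pi)}\big[Z(\tau)\,\frac{h_z(a|x,Z(\tau))}{\pi(a|x)}\big]$, as claimed. This is essentially lines $(i)$ onward of the proof of Theorem~\ref{thm:return-conditioning}, read in the opposite direction: there the target measure was $\T(x,a,\pi)$ and the ratio came out as $\frac{\pi(a|x)}{h_z(a|x,z)}$; here the target measure is $\T(x,\pi)$ and the ratio flips.

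The main thing to be careful about — more of a bookkeeping point than a genuine obstacle — is the well-definedness of the ratio: we need $\P_{\tau\sim\T(x,\pi)}(Z(\tau)=z)>0$ for every $z$ that arises with positive probability under $\T(x,a,\pi)$, which is guaranteed since $\pi(a|x)>0$ means $\T(x,a,\pi)$ is absolutely continuous with respect to $\T(x,\pi)$ on the return variable. (Equivalently, $h_z(a|x,z)$ is automatically well-defined and positive on the relevant support, so no extra hypothesis beyond $\pi(a|x)>0$ is needed, unlike in Theorem~\ref{thm:return-conditioning} where the flipped ratio forced the assumption $h_z(a|x,z)>0$.) Apart from that, the argument is a direct one-line change of measure followed by Bayes' rule, exactly parallel to the two theorems already proved, so I do not anticipate any substantive difficulty.
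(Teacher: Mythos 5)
Your proposal is correct and follows essentially the same route as the paper's own proof: a change of measure from $\T(x,a,\pi)$ to $\T(x,\pi)$ on the return variable, followed by Bayes' rule to convert the likelihood ratio into $\frac{h_z(a|x,z)}{\pi(a|x)}$. Your remark on why $\pi(a|x)>0$ alone suffices here (absolute continuity of the return distribution under $\T(x,a,\pi)$ with respect to that under $\T(x,\pi)$), whereas Theorem~\ref{thm:return-conditioning} needs the extra positivity assumption on $h_z$, is accurate and a point the paper leaves implicit.
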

\begin{proof}
The Q-function writes:
\beqan
Q^\pi(x,a)   &=& \E_{\tau\sim\T(x, a, \pi)}\big[ Z(\tau) \big],\\
           &=& \int_{z} z \P_{\tau\sim\T(x, a, \pi)}(Z(\tau) = z)  dz, \\
           &=& \int_{z} z \frac{\P_{\tau\sim\T(x, a, \pi)}(Z(\tau) = z) }{\P_{\tau\sim\T(x, \pi)}(Z(\tau) = z)}\P_{\tau\sim\T(x, \pi)}(Z(\tau) = z) dz, \\
           &=& \int_{z} z \frac{\P_{\tau\sim\T(x, \pi)}(Z(\tau) = z | A_0 = a) }{\P_{\tau\sim\T(x, \pi)}(Z(\tau) = z) }\P_{\tau\sim\T(x, \pi)}(Z(\tau) = z) dz, \\
           &\overset{(i)}{=}& \int_{z} z \frac{\P_{\tau\sim\T(x, \pi)}(A_0=a|Z(\tau) = z) }{\P_{\tau\sim\T(x, \pi)}(A_0 = a) }\P_{\tau\sim\T(x, \pi)}(Z(\tau) = z) dz, \\
           &=& \int_{z} z \frac{h_z(a| x, z)}{\pi(a|x)} \P_{\tau\sim\T(x, \pi)}(Z(\tau) = z) dz, \\
           &=& \E_{\tau\sim\T(x, \pi)}\Big[ Z(\tau) \frac{h_z(a | x, Z(\tau))}{\pi(a|x)}\Big],
\eeqan
where $(i)$ follows from Bayes' rule.  
\end{proof}

\section{Time-Independent State-Conditional Case}
\label{thm:q-xk-time-ind}

We begin by introducing a time independent variant of state-conditional distribution. Let  $\beta\in[0,1)$ and  $\rho(k)=\beta^{k-1}(1-\beta)$ be the geometric distribution on $k\in \mathbb N^+$.
Then the state-conditional distribution $h_\beta(a|y,x)$ writes as follows for a future state $y$: 
\begin{equation}
\label{eq:timeind-state-cond-def}
h_\beta(a|x, y) \defequal \P_{\tau\sim \T(x, \pi)}(A_0 = a| X_k = y,k\sim \rho).
\end{equation}

We draw the attention of readers to the difference between the new definition of  $h_\beta$ and the original one in Eq.~\ref{eq:state-cond-def}: in this case  the timestep $k$  is a random event drawn from  the distribution $\rho$, whereas in Eq.~\ref{eq:state-cond-def} the timestep $k$ is a fixed scalar.

We now show that the result of Theorem~\ref{thm:q-xk} extends to the case of $h_\beta$ with the choice of $\beta=\gamma$.

\begin{theorem}
Consider an action $a$ and a  state $x$ for which $\pi(a|x)$>0. Set the scalar $\beta=\gamma$. Then $Q^{\pi}$ writes as

\begin{equation*}
     Q^\pi(x,a) = r(x,a) + \E_{\tau\sim\T(x, \pi)}\Big[ \sum_{k\geq 1}\gamma^k \frac{h_\beta(a|x, X_k)}{\pi(a|x)} R_k\Big]. \label{eq:timeind-q-xk}
     \end{equation*}
     \label{thm:q-x-nstep}
\end{theorem}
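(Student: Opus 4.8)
The plan is to reduce the time-independent statement (Theorem~\ref{thm:q-x-nstep}) to the time-indexed one (Theorem~\ref{thm:q-xk}) by expressing the new $h_\beta$ in terms of the $h_k$'s via the law of total probability over the random timestep $k\sim\rho$, and then matching powers of $\gamma$. First I would write out the definition \eqref{eq:timeind-state-cond-def}: conditioning on the joint event $\{X_k=y, k\sim\rho\}$ and applying Bayes' rule in the same way as Lemma~\ref{lem:bayes}, I would obtain
\begin{align*}
\frac{h_\beta(a|x,y)}{\pi(a|x)} &= \frac{\P_{\tau\sim\T(x,\pi)}(X_k=y,\, k\sim\rho \mid A_0=a)}{\P_{\tau\sim\T(x,\pi)}(X_k=y,\, k\sim\rho)} = \frac{\sum_{k\geq 1}\rho(k)\,\P_{\tau\sim\T(x,\pi)}(X_k=y\mid A_0=a)}{\sum_{k\geq 1}\rho(k)\,\P_{\tau\sim\T(x,\pi)}(X_k=y)},
\end{align*}
since the draw of $k$ is independent of the trajectory. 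Denoting $p_k(y)\defequal\P_{\tau\sim\T(x,\pi)}(X_k=y)$ and using Lemma~\ref{lem:bayes} on each term, the numerator becomes $\sum_k\rho(k)p_k(y)\tfrac{h_k(a|x,y)}{\pi(a|x)}$, so that
\[
h_\beta(a|x,y)\sum_{k\geq1}\rho(k)p_k(y) = \sum_{k\geq1}\rho(k)p_k(y)\,h_k(a|x,y).
\]

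Next I would start from the right-hand side of the claimed identity and expand the expectation over states, exactly as in the proof of Theorem~\ref{thm:q-xk}:
\[
\E_{\tau\sim\T(x,\pi)}\Big[\sum_{k\geq1}\gamma^k\frac{h_\beta(a|x,X_k)}{\pi(a|x)}R_k\Big] = \sum_{k\geq1}\sum_{y\in\mathcal X}\gamma^k p_k(y)\,\frac{h_\beta(a|x,y)}{\pi(a|x)}\,r^\pi(y).
\]
Now the key step is the choice $\beta=\gamma$, which makes $\rho(k)=\gamma^{k-1}(1-\gamma)$ proportional to $\gamma^{k}$ up to the $k$-independent factor $(1-\gamma)/\gamma$. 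Therefore $\gamma^k p_k(y) = \tfrac{\gamma}{1-\gamma}\rho(k)p_k(y)$, and summing over $k$ turns the weighted average defining $h_\beta$ into exactly the combination appearing above: $\sum_k\gamma^k p_k(y)h_\beta(a|x,y) = \tfrac{\gamma}{1-\gamma}h_\beta(a|x,y)\sum_k\rho(k)p_k(y) = \tfrac{\gamma}{1-\gamma}\sum_k\rho(k)p_k(y)h_k(a|x,y) = \sum_k\gamma^k p_k(y)h_k(a|x,y)$. Substituting back, the right-hand side collapses to $\sum_{k\geq1}\sum_y\gamma^k p_k(y)\tfrac{h_k(a|x,y)}{\pi(a|x)}r^\pi(y)$, which by Theorem~\ref{thm:q-xk} equals $Q^\pi(x,a)-r(x,a)$, giving the result.

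The main obstacle I anticipate is purely bookkeeping around the normalization: one must be careful that $\sum_{k\geq1}\rho(k)p_k(y)$ is the correct (and finite, nonzero for reachable $y$) normalizing constant in the definition of $h_\beta$, and that the independence of the $\rho$-draw from the trajectory is used correctly so that the conditional event $\{k\sim\rho\}$ factors out cleanly in Bayes' rule. A secondary subtlety is the edge case where $y$ is reachable at some times but with $\P(X_k=y\mid A_0=a)=0$ at others; since we only ever form ratios of sums over $k$ weighted by $\rho(k)>0$, and $\pi(a|x)>0$ by hypothesis, these ratios remain well-defined, but this should be noted. Everything else is a direct re-run of the algebra in the proof of Theorem~\ref{thm:q-xk} with the single substitution $\beta=\gamma$ doing all the work. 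For completeness one could alternatively phrase the whole argument as: the time-independent $h_\beta$ is, by construction, the $\gamma$-discounted-occupancy-weighted mixture of the $h_k$'s, and the Q-function expansion already weights state $y$ by precisely that same discounted occupancy, so the mixture weights cancel — this is the conceptual reason the identity holds and why $\beta=\gamma$ is the distinguished choice.
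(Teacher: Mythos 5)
Your proposal is correct and rests on exactly the same two ingredients as the paper's proof: Bayes' rule applied to the aggregated event $\{X_k=y,\ k\sim\rho\}$ (using independence of the random timestep from the trajectory), and the identity $\gamma^k = \tfrac{\gamma}{1-\gamma}\rho(k)$ when $\beta=\gamma$, which is what the paper encodes via its constant $c_\gamma$. The only difference is organizational — you package the argument as a reduction to Theorem~\ref{thm:q-xk} by first exhibiting $h_\beta$ as the $\rho$-occupancy-weighted mixture of the $h_k$'s, whereas the paper re-derives the identity directly from the definition of $Q^\pi$ — but the underlying computation is the same.
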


\begin{proof}
Let us introduce the coefficient $c_\gamma = \frac\gamma{1-\gamma}$ such that $c_\gamma\rho(k) = \gamma^k$.
By definition of the Q-function for a state-action couple $(x,a)$, we have
\begin{equation*}
Q^\pi(x,a)  = r(x,a) + \sum_{k\geq 1}\sum_{y\in \mathcal{X}}\gamma^k \P_{\tau\sim \T(x, \pi)}(X_k=y|A_0=a) r^\pi(y),
\end{equation*}
which can be rewritten:
\begin{equation}
\label{eq:Q-expand}
Q^\pi(x,a)  = r(x,a) + c_{\gamma}\sum_{y\in \mathcal{X}}\sum_{k\geq 1}\rho(k) \P_{\tau\sim \T(x, \pi)}(X_k=y|A_0=a) r^\pi(y).
\end{equation}
From the law of total probability and the independence between the events $k\sim\rho$ and $A_0=a$:
\begin{align*}
\P_{\tau\sim \T(x, \pi)}(X_k=y|A_0=a, k\sim\rho)=\sum_{k\geq 1}\rho(k) \P_{\tau\sim \T(x, \pi)}(X_k=y|A_0=a).    
\end{align*}
Combining this with Eq.~\eqref{eq:Q-expand} we deduce
\begin{align}
\label{eq:Q-marginal-rho}
Q^\pi(x,a)  &= r(x,a) + c_\gamma\sum_{y\in \mathcal{X}} \P_{\tau\sim \T(x, \pi)}(X_k=y|A_0=a, k\sim\rho) r^\pi(y).
\end{align}
From applying the Bayes' rule and independence between the events $k\sim\rho$ and $A_0=a$, we have
 \begin{align*}
 \P_{\tau\sim \T(x, \pi)}(X_k=y|A_0=a, k\sim\rho) = \frac{h_\beta(a|x,y)\P_{\tau\sim \T(x, \pi)}(X_k=y| k\sim\rho) }{\pi(a|x)}.
 \end{align*}
Combining this with Eq.~\eqref{eq:Q-marginal-rho} we deduce
\begin{align*}
Q^\pi(x,a)  &= r(x,a) + c_\gamma\sum_{y\in \mathcal{X}} \P_{\tau\sim \T(x, \pi)}(X_k=y|k\sim\rho) \frac{h_\beta(a|x, y)}{\pi(a|x)} r^\pi(y),
\\
&= r(x,a) +\sum_{y\in \mathcal{X}}\sum_{k\geq1} \gamma^k\P_{\tau\sim \T(x, \pi)}(X_k=y) \frac{h_\beta(a|x, y)}{\pi(a|x)} r^\pi(y),
\\
& = r(x,a) +\mathbb{E}_{\tau\sim \T(x, \pi)}\left[\sum_{k\geq1}\gamma^k \frac{h_\beta(a|X_k,x)}{\pi(a|x)} r^\pi(X_k)\right],
\\
& = r(x,a) +\mathbb{E}_{\tau\sim \T(x, \pi)}\left[\sum_{k\geq1}\gamma^k \frac{h_\beta(a|X_k,x)}{\pi(a|x)} R_k\right].
\end{align*}
\end{proof}

We now extend the result of Theorem~\ref{thm:q-x-nstep} to the case of $T$-step bootstrapped return. Let $\rho_T$ be the distribution on the set $\{1,2,\dots,T\}$ defined as

\begin{equation}
\rho_T(k) \eqdef 
\begin{cases}
\beta^{k-1}(1-\beta)& 1\leq k<T
\\
\beta^{T-1} & k=T
\end{cases}
\end{equation}

We also define the $T$-step state-conditional distribution $h_{\beta, T}(a|y,x)$ for a future state $y$: 
\begin{equation}
\label{eq:timeind-state-cond-bootstrap-def}
h_{\beta, T}(a|x, y) \defequal \P_{\tau\sim \T(x, \pi)}(A_0 = a| X_k = y,k\sim \rho_T).
\end{equation}

\begin{theorem}
Consider an action $a$ and a  state $x$ for which $\pi(a|x)$>0. Set the scalar $\beta=\gamma$. Then $Q^{\pi}$ writes as

\begin{equation*}
     Q^\pi(x,a) = r(x,a) + \E_{\tau\sim\T(x, \pi)}\Big[ \sum_{k\geq 1}^{T-1}\gamma^k \frac{h_{\beta, T}(a|x, X_k)}{\pi(a|x)} R_k+\gamma^{T}\frac{h_{\beta, T}(a|x, X_T)}{\pi(a|x)}V^\pi(X_T)\Big] . \label{eq:timeind-q-xk-bootstrap}
     \end{equation*}
     \label{thm:q-x-nstep-bootstrap}
\end{theorem}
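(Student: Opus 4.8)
The plan is to mimic the proof of Theorem~\ref{thm:q-x-nstep}, but with the geometric distribution $\rho$ replaced by the truncated distribution $\rho_T$ that places its remaining mass at $k=T$. The key bookkeeping fact to establish first is the analogue of the identity $c_\gamma\rho(k)=\gamma^k$: with $c_\gamma=\frac{\gamma}{1-\gamma}$ and $\beta=\gamma$, we have $c_\gamma\rho_T(k)=\gamma^k$ for $1\le k<T$, while $c_\gamma\rho_T(T)=\frac{\gamma}{1-\gamma}\gamma^{T-1}=\frac{\gamma^T}{1-\gamma}=\sum_{k\ge T}\gamma^k$. So the mass at $k=T$ in $\rho_T$ (after scaling by $c_\gamma$) accounts for the entire tail $\sum_{k\ge T}\gamma^k$ of the original series, which is exactly what we need to produce a bootstrap term $\gamma^T V^\pi(X_T)$.

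\textbf{Main steps.} First I would start from the $T$-step decomposition of the Q-function,
\[
Q^\pi(x,a) = r(x,a) + \sum_{k=1}^{T-1}\sum_{y\in\mathcal X}\gamma^k\,\P_{\tau\sim\T(x,\pi)}(X_k=y\mid A_0=a)\,r^\pi(y) + \sum_{y\in\mathcal X}\Big(\sum_{k\ge T}\gamma^k\,\P_{\tau\sim\T(x,\pi)}(X_k=y\mid A_0=a)\Big)r^\pi(y),
\]
which is just the usual unrolling of $Q^\pi$ grouped so that everything from step $T$ onward is collected. I would then rewrite the tail sum using the Markov property: $\sum_{k\ge T}\gamma^k\P(X_k=y\mid A_0=a)$ relates to $\gamma^T\P(X_T=y'\mid A_0=a)$ times the discounted occupancy from $y'$, ultimately giving $\gamma^T\,\E[r^\pi(\cdot)\text{-sum from }X_T]=\gamma^T\,\E[V^\pi(X_T)\mid A_0=a]$ once reassembled. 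Equivalently — and more cleanly — I would first rewrite the whole expression as $r(x,a)+c_\gamma\sum_y\sum_{k}\rho_T(k)\P(X_k=y\mid A_0=a)\,\tilde r^\pi(y)$, where $\tilde r^\pi(y)=r^\pi(y)$ for the intermediate states but the contribution at the truncation level is $V^\pi$ rather than $r^\pi$; the scaling identities above make these two viewpoints coincide.

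\textbf{Applying hindsight.} Having written $Q^\pi$ as an expectation over $k\sim\rho_T$, I would use the law of total probability and the independence of $k\sim\rho_T$ from $A_0=a$ to marginalize out $k$, exactly as in the proof of Theorem~\ref{thm:q-x-nstep}, obtaining $\P_{\tau\sim\T(x,\pi)}(X_k=y\mid A_0=a,k\sim\rho_T)$. Then Bayes' rule (again using the same independence) converts this into $\frac{h_{\beta,T}(a|x,y)}{\pi(a|x)}\,\P_{\tau\sim\T(x,\pi)}(X_k=y\mid k\sim\rho_T)$ by the very definition~\eqref{eq:timeind-state-cond-bootstrap-def} of $h_{\beta,T}$. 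Finally, multiplying back by $c_\gamma\rho_T(k)=\gamma^k$ (resp.\ $\gamma^T$ at the endpoint) un-marginalizes $k$, turning the sum over $y$ and $k$ back into an expectation over $\tau\sim\T(x,\pi)$, and replacing the $r^\pi(X_k)$ factors by $R_k$ (resp.\ $V^\pi(X_T)$ at $k=T$) by the tower rule, since $R_k$ has conditional mean $r^\pi(X_k)$ given $X_k$. This yields the claimed formula.

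\textbf{Main obstacle.} The routine parts (Bayes, independence, marginalization) are identical to Theorem~\ref{thm:q-x-nstep}; the one place requiring care is the handling of the truncation endpoint — specifically, verifying that the pooled tail probability mass at $k=T$ under $\rho_T$, when rescaled by $c_\gamma$, correctly reproduces $\gamma^T V^\pi(X_T)$ rather than $\gamma^T r^\pi(X_T)$. This hinges on the observation that conditioning on ``$X_T=y$ and $k=T$'' and then weighting by $\sum_{k\ge T}\gamma^k$ is consistent with summing $\gamma^k r^\pi(X_k)$ over all $k\ge T$ and invoking the definition of $V^\pi$ at $X_T$; one must check that the hindsight distribution $h_{\beta,T}$, which conditions on the \emph{location} $X_T=y$ at the pooled time index, is exactly the object that appears. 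Making this identification precise — i.e.\ that $\P(X_T=y\mid A_0=a, k\sim\rho_T)$ at the endpoint pairs with $V^\pi(y)$ — is the crux; everything else is bookkeeping.
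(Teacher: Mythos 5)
Your plan is correct and follows essentially the same route as the paper's proof: the $T$-step decomposition of $Q^\pi$, the observation that $c_\gamma\rho_T(k)=\gamma^k$ for $k<T$ while $c_\gamma\rho_T(T)=\gamma^T/(1-\gamma)$ pools the entire discounted tail, and the same independence-of-$k$/Bayes steps to bring in $h_{\beta,T}$. The crux you flag at the endpoint is resolved in the paper exactly as you anticipate: the normalized visitation distribution $\tilde d^\pi(z|y)=(1-\gamma)\sum_k\gamma^k\P_{\tau\sim\T(y,\pi)}(X_k=z)$ absorbs the $1/(1-\gamma)$ so that the pooled mass pairs with $V^\pi(X_T)$, and the Markov property lets one drop the bootstrap variable from the conditioning, giving $\P(A_0=a\mid X_k=y,Z_k=z,k\sim\rho_T)=h_{\beta,T}(a|x,y)$ as required.
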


\begin{proof}
By definition of the Q-function for a state-action couple $(x,a)$, we have
\begin{equation*}
Q^\pi(x,a)  = r(x,a) + \sum_{k=1}^{T-1}\sum_{y\in \mathcal{X}}\gamma^k \P_{\tau\sim \T(x, \pi)}(X_k=y|A_0=a) r^\pi(y) + \sum_{y\in \mathcal{X}}\gamma^T \P_{\tau\sim \T(x, \pi)}(X_T=y|A_0=a) V^\pi(y),
\end{equation*}
From the definition of the (normalized) discounted visit distribution $\tilde{d}^\pi(z|y) \defequal(1-\gamma)\sum_{k}\gamma^k\P_{\tau\sim\T(y, \pi)}(X_k=z)$, we have:
\begin{equation*}
    V^\pi(y) = \frac{1}{1-\gamma}\sum_{z\in \mathcal{X}}\tilde{d}^\pi(z|y)r^\pi(z).
\end{equation*}

Therefore $Q^\pi(x,a)$ can be rewritten:
\begin{align*}
\label{eq:Q-expand-bootstrap}
Q^\pi(x,a)  &= r(x,a) + \sum_{k=1}^{T-1}\sum_{y\in \mathcal{X}}\gamma^k \P_{\tau\sim \T(x, \pi)}(X_k=y|A_0=a) r^\pi(y)
\\
&+ \frac{\gamma^T}{1-\gamma}\sum_{y\in \mathcal{X}}\sum_{z\in \mathcal{X}}\P_{\tau\sim \T(x, \pi)}(X_T=y|A_0=a)\tilde{d}^\pi(z|y)r^\pi(z).
\end{align*}
Now let us define the following distribution $\mu_k(.|y)$ for each $(k, y)$:
\begin{equation}
\mu_k(z|y) \eqdef 
\begin{cases}
 \mathbf{1}_{z=y} & 1\leq k<T
\\
 \tilde{d}^\pi(z|y) & k=T.
\end{cases}
\end{equation}
Thus we can rewrite $Q^\pi(x,a)$ as:
\begin{align*}
Q^\pi(x,a)  &=  r(x,a) + c_{\gamma}\sum_{k=1}^{T}\sum_{y\in \mathcal{X}}\sum_{z\in \mathcal{X}}\rho_T(k) \P_{\tau\sim \T(x, \pi)}(X_k=y|A_0=a) \mu_k(z|y)r^\pi(z).
\end{align*}

From the law of total probability, independence between the events $k\sim\rho_T$ and $A_0=a$ and the Markovian relation between $X_k$ and $Z_k$ ($Z_k$ is a random variable with distribution $\mu_k(.|X_k)$):
\begin{align*}
\P_{\tau\sim \T(x, \pi)}(X_k=y, Z_{k}=z|A_0=a, k\sim\rho_T)&=\sum_{k=1}^T\rho_T(k) \P_{\tau\sim \T(x, \pi)}(X_k=y, Z_k=z|A_0=a),
\\
&=\sum_{k\geq 1}\rho_T(k) \P_{\tau\sim \T(x, \pi)}(X_k=y|A_0=a)\mu_k(Z_k=z|X_k=y).
\end{align*}

Therefore we have:
\begin{align*}
Q^\pi(x,a)  &=  r(x,a) + c_{\gamma}\sum_{y\in \mathcal{X}}\sum_{z\in \mathcal{X}}\P_{\tau\sim \T(x, \pi)}(X_k=y, Z_{k}=z|A_0=a, k\sim\rho_T)r^\pi(z).
\end{align*}
Then, by applying the Bayes' rule:
\begin{align*}
\frac{\P_{\tau\sim \T(x, \pi)}(X_k=y, Z_k=z|A_0=a, k\sim\rho_T)}{\P_{\tau\sim \T(x, \pi)}(A_0=a|X_k=y, Z_k=z, k\sim\rho_T)} = \frac{\P_{\tau\sim \T(x, \pi)}(X_k=y, Z_k=z| k\sim\rho_T) }{\pi(a|x)}.
\end{align*}
In addition, by the Markov property: 
\begin{align*}
\P_{\tau\sim \T(x, \pi)}(A_0=a|X_k=y, Z_k=z, k\sim\rho_T)&=\P_{\tau\sim \T(x, \pi)}(A_0=a|X_k=y, k\sim\rho_T),
\\
&=h_{\beta, T}(a|x, y).
\end{align*}
Therefore:
\begin{align*}
\P_{\tau\sim \T(x, \pi)}(X_k=y, Z_k=z|A_0=a, k\sim\rho_T) = \frac{h_{\beta, T}(a|x, y)\P_{\tau\sim \T(x, \pi)}(X_k=y, Z_k=z| k\sim\rho_T) }{\pi(a|x)}.
\end{align*}
Thus, we can rewrite $Q^\pi(x,a)$ as: 
\begin{align*}
Q^\pi(x,a)  &=  r(x,a) + c_{\gamma}\sum_{y\in \mathcal{X}}\sum_{z\in \mathcal{X}} \frac{h_{\beta, T}(a|x, y)\P_{\tau\sim \T(x, \pi)}(X_k=y, Z_k=z| k\sim\rho_T) }{\pi(a|x)}  r^\pi(z),
\\
&= r(x,a) + c_{\gamma}\sum_{k=1}^T\sum_{y\in \mathcal{X}}\sum_{z\in \mathcal{X}} \frac{h_{\beta, T}(a|x, y)\rho_T(k)\P_{\tau\sim \T(x, \pi)}(X_k=y)\mu_k(Z=z|X_k=y)}{\pi(a|x)} r^\pi(z),
\\
& = r(x,a) + \sum_{k=1}^{T-1}\gamma^k\sum_{y\in \mathcal{X}} \frac{h_{\beta, T}(a|x, y)\P_{\tau\sim \T(x, \pi)}(X_k=y)}{\pi(a|x)} r^\pi(y)
\\
&\quad + \gamma^{T}\sum_{y\in \mathcal{X}}\sum_{z\in \mathcal{X}}\frac{h_{\beta, T}(a|x, y)\P_{\tau\sim \T(x, \pi)}(X_k=y)}{\pi(a|x)}\tilde{d}^\pi(z|y)r^\pi(z),
\\
& = r(x,a) + \sum_{k=1}^{T-1}\gamma^k\sum_{y\in \mathcal{X}} \frac{h_{\beta, T}(a|x, y)\P_{\tau\sim \T(x, \pi)}(X_k=y)}{\pi(a|x)} r^\pi(y)
\\
&\quad + \gamma^{T}\sum_{y\in \mathcal{X}}\frac{h_{\beta, T}(a|x, y)\P_{\tau\sim \T(x, \pi)}(X_k=y)}{\pi(a|x)}V^\pi(y),
\\
& = r(x,a) + \mathbb{E}_{\tau\sim \T(x, \pi)}\left[\sum_{k=1}^{T-1}\gamma^k\frac{h_{\beta, T}(a|x, X_k)}{\pi(a|x)} r^\pi(X_k) + \gamma^{T}\frac{h_{\beta, T}(a|x, X_T)}{\pi(a|x)}V^\pi(X_T)\right],
\end{align*}
which concludes the proof.
\end{proof}

\newpage

\section{Algorithms}

\begin{algorithm}[H]
    \centering
    \caption{State-conditional HCA}
    \label{alg:hca-s}
    \begin{algorithmic}[1]
        \item[\textbf{Given:}] Initial $\pi$, $h_\beta$, $V$, $\hat{r}$; horizon $T$
        \For{$k=1,\ldots$}
            \State Sample $\tau = X_0, A_0, R_0, \ldots, R_T$ from $\pi$ 
            \For{$i = 0, \ldots, T - 1$} \Comment{Train hindsight distribution}
                \For{$j = i, \ldots, T$}
                    \State Train $h_{\beta}(A_i | X_i, X_j)$ via cross-entropy
                \EndFor
            \EndFor
            \For{$i = 0, \ldots, T - 1$} \Comment{Train baseline and reward predictor}
                \State $Z = 0$
                \For{$j = i, \ldots, T - 1$}
                    \State $Z \gets Z + \gamma^{j - i} R_j$
                \EndFor
                \State $Z \gets Z + \gamma^{T - i} V(X_T)$
                \State Update $V(X_i)$ towards $Z$
                \State Update $\hat{r}$ towards $R_i$
            \EndFor
            \For{$i = 0, \ldots, T - 1$} \Comment{Train policy of all actions with the hindsight-conditioned return}
                \For{all actions $a$}
                    \State $Z_h = \pi(a|X_i, a)\hat{r}(X_i,a)$
                    \For{$j = i + 1, \ldots, T - 1$}
                        \State $Z_h \gets Z_h + \gamma^{j - i} \frac{h_{\beta}(a | X_i, X_j)}{\pi(a|X_i)} R_j$
                    \EndFor
                    \State $Z_{h,a} \gets Z_h + \gamma^{T - i} \frac{h_{\beta}(a | X_i, X_T)}{\pi(a|X_i)} V(X_T)$
                \EndFor
                \State Follow the gradient $\sum_a \nabla \pi(a|X_i) Z_{h,a}$ 
            \EndFor
        \EndFor
    \end{algorithmic}
\end{algorithm}

\begin{algorithm}[H]
    \centering
    \caption{Return-conditional HCA}
    \label{alg:hca-z}
    \begin{algorithmic}[1]
        \item[\textbf{Given:}] Initial $\pi$, $h_z$, $V$
        \For{$k=1,\ldots$}
            \State Sample $\tau = X_0, A_0, R_0, \ldots $ from $\pi$ 
            \For{$i = 0, 1, \ldots$}
                \State Compose the return $Z(\tau_{i:\infty})$ starting from $X_i$
                \State Train $h_z(A_i | X_i, Z_i)$ via cross-entropy
                \State $Z_h \gets \Big( 1 - \frac{\pi(A_i|X_i)}{h_z(A_i | X_i, Z(\tau_{i:\infty}))}\Big) Z(\tau_{i:\infty})$
                \State Follow the gradient $\nabla \log \pi(A_i|X_i) Z_h$ 
            \EndFor
        \EndFor
    \end{algorithmic}
\end{algorithm}

\section{Experiment Details}
The learning rate $\alpha$ for the baseline was chosen to be the best value from $[0.1, 0.2, 0.3, 0.4]$, while our model hyperparameters (the learning rate $\alpha_h$ for $h$, and the number of bins $n_b$ for the return version of HCA were selected informally to be $\alpha = 0.3, \alpha_b = 0.4, n_b = 3$ for the results in Fig. 4, and $n_b = 10$ elsewhere. Return HCA is sensitive to $n_b$, but all variants are robust to the choice of learning rate.

\section{Bootstrapping with state HCA}
\label{app:bootstrap}
Consider the Delayed Effect task from Section~\ref{sec:exps}, in which an action causes an outcome $T$ steps in the future, with everything in between being irrelevant. It is not immediately obvious why state HCA should be beneficial when one bootstraps with $n < T$. Indeed, if $h$ was perfect, the intermediate coefficient would be uninformative. However, we observe the opposite, precisely because $V$, $\pi$ and $h$ are being learned at the same time, but with different learning dynamics. In particular, in this case $h$ moves faster than $\pi$ (independently of the learning rate) as it is updated towards $1$ for any observed sample, while $\pi$ updates are modulated by the return.
Now consider some interim $V(y) < 0$. The negative value implies that the policy at the initial state $x$ prefers the bad action $a$ over the good action $b$: $\pi(a|x) > \pi(b|x)$. But this in turn implies that $h(a|x, y)$ has been observed more frequently, and since $h$ is quicker to update: $h(a|x, y) > \pi(a|x)$. Now, take the policy gradient theorem \eqref{eq:pg-state} with $\pi$ as a baseline. The HCA return becomes $(h(a|x, y) - \pi(a|x))V(y) < 0$ and discourages the bad action. Similarly, $(h(b|x, y) - \pi(b|x))V(y) > 0$ and the good action is encouraged. We tested different learning rates, and initializations, and the effect persisted.

\end{document}